\definecolor{Gray}{gray}{0.95}
\DeclareMathAlphabet{\pazocal}{OMS}{zplm}{m}{n}
\numberwithin{equation}{section}
\newtheorem{theorem}{Theorem}[section]
\newtheorem{lemma}[theorem]{Lemma}
\newtheorem{algorithm}[theorem]{Algorithm}
\numberwithin{equation}{section}
\theoremstyle{definition}
\newtheoremstyle{myremarkstyle}{}{}{}{}{\bfseries}{.}{ }{}
\theoremstyle{myremarkstyle}
\declaretheorem[name=Remark,qed=$\blacksquare$,numberlike=theorem]{remark}
\newcommand*{\intavg}{%
  \mint@l{-}{}%
}
\newcommand*{\mint@l}[2]{%
  \@ifnextchar\limits{%
    \mint@l{#1}%
  }{%
    \@ifnextchar\nolimits{%
      \mint@l{#1}%
    }{%
      \@ifnextchar\displaylimits{%
        \mint@l{#1}%
      }{%
        \mint@s{#2}{#1}%
      }%
    }%
  }%
}
\newcommand*{\mint@s}[2]{%
  \@ifnextchar_{%
    \mint@sub{#1}{#2}%
  }{%
    \@ifnextchar^{%
      \mint@sup{#1}{#2}%
    }{%
      \mint@{#1}{#2}{}{}%
    }%
  }%
}
\def\mint@sub#1#2_#3{%
  \@ifnextchar^{%
    \mint@sub@sup{#1}{#2}{#3}%
  }{%
    \mint@{#1}{#2}{#3}{}%
  }%
}
\def\mint@sup#1#2^#3{%
  \@ifnextchar_{%
    \mint@sub@sup{#1}{#2}{#3}%
  }{%
    \mint@{#1}{#2}{}{#3}%
  }%
}
\def\mint@sub@sup#1#2#3^#4{%
  \mint@{#1}{#2}{#3}{#4}%
}
\def\mint@sup@sub#1#2#3_#4{
  \mint@{#1}{#2}{#4}{#3}%
}
\newcommand*{\mint@}[4]{%
  \mathop{}%
  \mkern-\thinmuskip
  \mathchoice{%
    \mint@@{#1}{#2}{#3}{#4}%
        \displaystyle\textstyle\scriptstyle
  }{%
    \mint@@{#1}{#2}{#3}{#4}%
        \textstyle\scriptstyle\scriptstyle
  }{%
    \mint@@{#1}{#2}{#3}{#4}%
        \scriptstyle\scriptscriptstyle\scriptscriptstyle
  }{%
    \mint@@{#1}{#2}{#3}{#4}%
        \scriptscriptstyle\scriptscriptstyle\scriptscriptstyle
  }%
  \mkern-\thinmuskip
  \int#1%
  \ifx\\#3\\\else_{#3}\fi
  \ifx\\#4\\\else^{#4}\fi  
}
\newcommand*{\mint@@}[7]{%
  \begingroup
    \sbox0{$#5\int\m@th$}%
    \sbox2{$#5\int_{}\m@th$}%
    \dimen2=\wd0 %
    \let\mint@limits=#1\relax
    \ifx\mint@limits\relax
      \sbox4{$#5\int_{\kern1sp}^{\kern1sp}\m@th$}%
      \ifdim\wd4>\wd2 %
        \let\mint@limits=\nolimits
      \else
        \let\mint@limits=\limits
      \fi
    \fi
    \ifx\mint@limits\displaylimits
      \ifx#5\displaystyle
        \let\mint@limits=\limits
      \fi
    \fi
    \ifx\mint@limits\limits
      \sbox0{$#7#3\m@th$}%
      \sbox2{$#7#4\m@th$}%
      \ifdim\wd0>\dimen2 %
        \dimen2=\wd0 %
      \fi
      \ifdim\wd2>\dimen2 %
        \dimen2=\wd2 %
      \fi
    \fi
    \rlap{%
      $#5%
        \vcenter{%
          \hbox to\dimen2{%
            \hss
            $#6{#2}\m@th$%
            \hss
          }%
        }%
      $%
    }%
  \endgroup
}
\def\XXint#1#2#3{{\setbox0=\hbox{$#1{#2#3}{\int}$ }
		\vcenter{\hbox{$#2#3$ }}\kern-.6\wd0}}
\renewcommand{\geq}{\geqslant}
\renewcommand{\leq}{\leqslant}
\renewcommand{\epsilon}{\varepsilon}
\renewcommand{\phi}{\varphi}
\newcommand{\R}{\mathbb{R}}
\newcommand{\N}{\mathbb{N}}
\newcommand{\bu}{{\bf u}}		
\newcommand{\train}{\EuScript{S}}
\newcommand{\er}{\EuScript{E}}
\newcommand{\dom}{\mathbb{D}}
\newcommand{\res}{\EuScript{R}}
\newcommand{\domain}{D_T\times S \times \Lambda}
\begin{document}

\date{\today}

\title{Physics Informed Neural Networks for \\Simulating Radiative Transfer}

\author{Siddhartha Mishra, Roberto Molinaro  \thanks{Seminar for Applied Mathematics (SAM), D-Math \newline
  ETH Z\"urich, R\"amistrasse 101. }}

\date{\today}

\maketitle

\begin{abstract}
We propose a novel machine learning algorithm for simulating radiative transfer. Our algorithm is based on physics informed neural networks (PINNs), which are trained by minimizing the residual of the underlying radiative transfer equations. We present extensive experiments and theoretical error estimates to demonstrate that PINNs provide a very easy to implement, fast, robust and accurate method for simulating radiative transfer. We also present a PINN based algorithm for simulating inverse problems for radiative transfer efficiently.
\end{abstract}

\section{Introduction}
The study of radiative transfer is of vital importance in many fields of science and engineering including astrophysics, climate dynamics, meteorology, nuclear engineering and medical imaging \cite{Modbook}. The fundamental equation describing radiative transfer is a \emph{linear partial integro-differential equation}, termed as the \emph{radiative transfer equation}. Under the assumption of a static underlying medium, it has the following form \cite{Modbook}, 
\begin{equation}
\label{eq:genRTE}
\begin{aligned}
  \frac{1}{c}u_t +   \omega\cdot\nabla_x u  + ku + \sigma\Bigg(u - \frac{1}{s_d}\int\limits_{\Lambda}\int\limits_{S} \Phi(\omega, \omega^{\prime}, \nu, \nu^{\prime})u(t,x,\omega^{\prime}, \nu^{\prime}) d\omega^{\prime}d\nu^{\prime} \Bigg)= & f,
\end{aligned}
\end{equation}
with time variable $t \in [0,T]$, space variable $x \in D \subset \R^d$ (and $D_T = [0,T] \times D$), \emph{angle} $\omega \in S = {\mathbb S}^{d-1}$ i.e. the $d$-dimensional sphere and \emph{frequency} (or group energy) $\nu \in \Lambda \subset \R$. The constants in \eqref{eq:genRTE} are the speed of light $c$ and the surface area $s_d$ of the $d$-dimensional unit sphere. The unknown of interest in \eqref{eq:genRTE} is the so-called \emph{radiative intensity} $u: D_T \times S \times \Lambda \mapsto \R$, while $k = k(x,\nu): D \times \Lambda \mapsto \R_+$ is the \emph{absorption coefficient} and $\sigma =  \sigma(x,\nu): D \times \Lambda \mapsto \R_+$ is the \emph{scattering coefficient}. The integral term in \eqref{eq:genRTE} involves the so-called \emph{scattering kernel} $\Phi: S\times S \times \Lambda \times \Lambda \mapsto \R$, which is normalized as $\int_{S\times \Lambda} \Phi(\cdot 
,~\omega^{\prime},~\cdot,~\nu^{\prime}) d\omega^{\prime} d \nu^{\prime} = 1$, in order to account for the conservation of photons during scattering. The dynamics of radiative transfer are driven by a source (emission) term $f = f(x,\nu):  D \times \Lambda \mapsto \R$. 

Although the radiative transfer equation \eqref{eq:genRTE} is linear, explicit solution formulas are only available in very special cases \cite{Modbook}. Hence, numerical methods are essential for the simulation of the radiative intensity in \eqref{eq:genRTE}. However, the design of efficient numerical methods is considered to be very challenging \cite{Kanbook,KAU,Modbook}. This is on account of the \emph{high-dimensionality} of the radiative transfer equation \eqref{eq:genRTE}, where in the most general case of three space dimensions, the radiative intensity is a function of $7$ variables ($4$ for space-time, $2$ for angle and $1$ for frequency). Traditional grid-based numerical methods such as finite elements or finite differences, which involve $N^\ell$ degrees of freedom (for $\ell$ dimensions, with $N$ being the number of points in each dimension), require massive computational resources to be able to simulate radiative transfer accurately \cite{Kanbook,KAU}. Moreover in practice, one has to encounter media with very different optical properties characterized by different scales in the absorption and scattering coefficients and in the emission term in \eqref{eq:genRTE}, which further complicates the design of robust and efficient numerical methods. 

In spite of the aforementioned challenges, several types of numerical methods have been proposed in the literature for simulating radiative transfer, see \cite{Fran,GRELLA,Kanbook,Modbook} and references therein for a detailed overview. These include Monte Carlo ray-tracing type particle methods \cite{Stam1}, which do not suffer from the curse of dimensionality and are easy to parallelize but are characterized by slow convergence (with respect to number of particles) and are mostly limited to media with fairly uniform optical properties. \emph{Discrete Ordinate Methods} (DOM), are based on the discretization of the angular domain $S$ with a number of fixed directions and the resulting systems of spatio-temporal PDEs is solved by finite element or finite difference methods. Although easy to implement, these methods can be very expensive and also suffer from the so-called ray effects in optically thin media \cite{Lat}. Spherical harmonics, based on a series expansion in the angle, have been widely used in radiative transfer \cite{Modbook}. Although shown to exhibit spectral convergence for smooth solutions \cite{GrS}, these methods are well-known to still suffer from the curse of dimensionality, see \cite{GRELLA,ModYang}. A particular variant of the spherical harmonics, the so-called $P_1$ method, is an example of a class of flux limited diffusion methods \cite{Fran}, which are widely used for optically thick media. 

\emph{Moment} based methods lead to another class of numerical methods for simulating radiative transfer, see \cite{Fran} and references therein. For these methods, one derives a PDE for the so-called \emph{incident radiation} by integrating \eqref{eq:genRTE} over the angular domain $S$. The evolution of the incident radiation is determined by the \emph{heat (radiation) flux}, which is also the first angular moment. The evolution of the heat flux has to be determined from the second angular moment of $u$, which is termed as the \emph{pressure tensor}. These hierarchy of moments have to be closed by suitable closure relations (see \cite{Fran} and references therein) and the resulting PDEs are discretized by finite elements or finite differences. These moment based methods can lead to inaccurate approximation of the incident radiation, particularly when suitable closures are not available. Finally in recent years, several attempts have been made to design efficient finite element methods for the radiative transfer equation, such as those based on sparse grids \cite{Widmer} or sparse tensor product finite element spaces \cite{Kanbook,Widmer}. Although these methods can alleviate the curse of dimensionality in certain cases, they are rather complicated to implement and can still be computationally expensive, particularly when higher-order elements are used \cite{GRELLA}.

Summarizing the above discussion, it is fair to conclude that all the proposed methods have some deficiencies, in particular in their computational cost for simulating realistic problems. Thus, there is a pressing need to design a numerical method that is accurate, fast (in terms of computational time), easy to use and able to deal with the high dimensions and optical heterogeneity of the underlying radiative transfer equation \eqref{eq:genRTE}. We aim to propose such a numerical method in this article. 

Our proposed numerical method is based on deep neural networks \cite{DLbook}, i.e. functions formed by concatenated compositions of affine transformations and scalar non-linear activation functions. Deep neural networks have been extremely successful at diverse tasks in science and engineering \cite{DLnat} such as at image and text classification, computer vision, text and speech recognition, autonomous systems and robotics, game intelligence and even protein folding \cite{Dfold}. They are being increasingly used in the context of scientific computing, particularly for different aspects of numerical solutions of PDEs \cite{HEJ1,E1,LMPR1,LMR1} and references therein. 

As deep neural networks possess the so-called \emph{universal approximation property} or ability to accurately approximate any continuous (even measurable) function \cite{Bar1}, they can be used as ansatz (search) spaces for solutions of PDEs. This property lays the foundation for the so-called \emph{Physics informed neural networks} (PINNs) which collocate the PDE residual on \emph{training points} of the approximating deep neural network. First proposed in \cite{Lag2,Lag1}, PINNs been revived and developed in significantly greater detail recently in the pioneering contributions of Karniadakis and collaborators. PINNs have been successfully applied to simulate a variety of forward and inverse problems for PDEs, see \cite{KAR8, jag1, jag2, KAR9,KAR5,KAR6,KAR7,KAR1,KAR2,KAR4, shukla} and references therein.

In recent papers \cite{MM1} (for the forward problem) and \cite{MM2} (for the inverse problem), the authors analyzed PINNs and provide a rigorous explanation for the efficiency of PINNs, based on stability of the underlying PDEs. A surprising observation in \cite{MM1,MM2} was the ability of PINNs to overcome the curse of dimensionality, at least for some PDEs. This observation, together with the well-documented ability of PINNs to approximate PDEs is a starting point of this paper where we adapt PINNs to solve the radiative transfer equation \eqref{eq:genRTE}. The main contributions of the current paper are as follows,
\begin{itemize}
    \item We present a novel algorithm for approximating the radiative transfer equation \eqref{eq:genRTE} in a very general setting. Our algorithm is based on suitable physics informed neural networks (PINNs). 
    \item We analyze the proposed algorithm by rigorously proving an estimate on the so-called generalization error of the PINN. This estimate shows that as long as the PINN is trained well, it approximates the solution of \eqref{eq:genRTE} to high accuracy.
    \item We present a suite of numerical experiments to illustrate the accuracy and efficiency of the proposed algorithm. 
    \item A major advantage of PINNs is their ability to approximate inverse problems (with the same level of complexity as the forward problem). Hence, we will also modify PINNs to approximate an inverse problem for radiative transfer, namely determining the unknown absorption or scattering coefficients in \eqref{eq:genRTE} from measurements of moments of the radiative intensity.

\end{itemize}
Thus, we present a novel, fast, robust, accurate and easy to code and implement algorithm for simulating the general form of the  radiative transfer equations \eqref{eq:genRTE} and provide analysis and numerical experiments to demonstrate that this algorithm efficiently approximates both forward and inverse problems for radiative transfer. The rest of the paper is organized as follows, in section \ref{sec:2}, we describe the PINNs algorithm and provide an estimate on the underlying generalization error. Numerical experiments are presented in section \ref{sec:3}, PINNs for inverse problems are described in section \ref{sec:4} and the proposed method and results are discussed in section \ref{sec:5}.

\section{Physics informed neural networks for approximating \eqref{eq:genRTE}}
\label{sec:2}
In this section, we describe the PINNs algorithm for simulating radiative transfer. We start by elaborating on the underlying PDE \eqref{eq:genRTE}. 
\subsection{The model.}
\label{sec:21}
We model radiative transfer in a static medium by the evolution equation \eqref{eq:genRTE} for the radiative intensity $u$. This partial integro-differential equation is supplemented with the initial condition,
\begin{equation}
    \label{eq:ic}
     u(0, x,\omega,\nu) = u_0(x,\omega,\nu), \quad (x,\omega, \nu)\in D\times S\times \Lambda,
\end{equation}
for some initial datum $u_0: D \times S \times \Lambda \mapsto \R$.

Given that the radiative transfer equation \eqref{eq:genRTE} is a \emph{transport equation}, the boundary conditions are imposed on the so-called \emph{inflow boundary} given by,
\begin{equation}
    \Gamma_-= \{(t, x,\omega,\nu)\in [0,T]\times\partial D\times S \times \Lambda: \omega \cdot n(x) <0\}
\end{equation}
with $n(x)$ denoting the unit outward normal at any point $x \in \partial D$ (the boundary of the spatial domain $D$). We specific the following boundary condition, 
\begin{equation}
    \label{eq:bc}
 u(t,x,\omega,\nu) = u_b(t, x,\omega,\nu), \quad (t, x,\omega, \nu)\in\Gamma_-,
\end{equation}
for some boundary datum $u_b: \Gamma_- \mapsto \R$.

Given that the radiative intensity is a function of $2d+1$-variables, it is essential to find suitable low-dimensional functionals (observables) to visualize and interpret it. To this end, one often considers physically interesting angular-moments such as the \emph{incident radiation} (zeroth angular moment) and \emph{heat flux} (first angular moment) given by,
\begin{equation}
\label{eq:inc_rad}
    G(t, x, \nu) = \int\limits_S u(t, x,\omega, \nu)d\omega
\end{equation}
\begin{equation}
    F(t, x, \nu) = \int\limits_S u(t,x,\omega, \nu)\omega d\omega
\end{equation}
 
We note that for many applications of radiative transfer, it is common to consider the steady (time-independent) version of the radiative transfer equation \eqref{eq:genRTE}, which formally results from setting $c \rightarrow \infty$ and dropping the time-derivative term in the left hand side of \eqref{eq:genRTE}.

\subsection{Quadrature rules and Training points}
\label{sec:22}
Quadrature i.e numerical approximation of integrals, is essential for simulating the radiative transfer equation \eqref{eq:genRTE} with PINNs. It is needed for approximating the integral with the scattering kernel in \eqref{eq:genRTE}. Moreover, we follow \cite{MM1}, where quadrature points were used as training points for PINNs. 

Given any domain $\dom$ and an integrable function $g: \dom \mapsto \R$, we need to specify quadrature points $z_i \in \dom$ for $1 \leq i \leq N$, and quadrature weights $w_i$ in order to perform the following approximation,
\begin{equation}
    \label{eq:quad1}
    \int\limits_{\dom} g(z) dz \approx \sum\limits_{i=1}^N w_i g(z_i). 
\end{equation}

For our specific integrals, we consider \emph{Gauss-Legendre} quadrature rules \cite{SBbook} for approximating the scattering kernel integral in \eqref{eq:genRTE}. To this end, we choose points $z^S_i = (\omega^S_i,\nu^S_i)$, for $1 \leq i \leq N_S$, with $\omega^S_i \in S$ and $\nu^S_i \in \Lambda$ as the Gauss-Legendre quadrature points and the weights $w_i^S$ are the corresponding quadrature weights for a Gauss-Legendre quadrature rule of order $s \geq 1$.

We also need the following training points for the PINNs algorithm,
\subsubsection{Interior training points.}
We set $\train_{int} = \{z^{int}_j\}$, for $1 \leq j \leq N_{int}$, and $z_j^{int} = (t^{int}_j,x^{int}_j,\omega^{int}_j,\nu^{int}_j)$ with $t^{int}_j \in [0,T], x^{int}_j \in D, \omega^{int}_j \in S, \nu^{int}_j \in \Lambda$, for all $j$. These points are the quadrature points of a suitable quadrature rule with weights $w^{int}_j$.

If the underlying spatial domain $D \subset \R^d$ can be mapped to a $d$-dimensional rectangle, either entirely or in patches, then we can set the training points $z^{int}_j$ as a low-discrepancy Sobol sequence \cite{SOB} in $[0,1]^{2d+1}$, by rescaling the relevant domains. Sobol sequences arise in the context of Quasi-Monte Carlo integration \cite{CAF1} and the corresponding quadrature weights are $w^{int}_j \equiv \frac{1}{N_{int}}$, for all $j$. Note that the QMC quadrature rule does not suffer from the curse of dimensionality (see section \ref{sec:25} for details). In case the geometry of the domain is very complicated, one has simply choose random points, independent and identically distributed with the underlying uniform distribution, as training points. 
\subsubsection{Temporal boundary training points.}
We denote $\train_{tb} = \{z^{tb}_j\}$, for $1 \leq j \leq N_{tb}$, and $z_j^{tb} = (x^{tb}_j,\omega^{tb}_j,\nu^{tb}_j)$ with $x^{tb}_j \in D, \omega^{tb}_j \in S, \nu^{tb}_j \in \Lambda$, for all $j$. These points are the quadrature points of a suitable quadrature rule with weights $w^{tb}_j$. We can choose Sobol points for logically rectangular domains $D$ or random points to constitute this training set. 
\subsubsection{Spatial boundary training points.}
We denote $\train_{sb} = \{z^{sb}_j\}$, for $1 \leq j \leq N_{sb}$, and $z_j^{sb} = (t^{tb}_j, x^{tb}_j,\omega^{tb}_j,\nu^{tb}_j)$ with $t^{tb}_j \in [0,T], x^{tb}_j \in \partial D, \omega^{tb}_j \in S, \nu^{tb}_j \in \Lambda$, for all $j$. These points are the quadrature points of a suitable quadrature rule with weights $w^{sb}_j$. As before, we can choose Sobol points for logically rectangular domains $D$ or random points to constitute this training set. 
\subsection{Neural Networks}
\label{sec:23}
PINNs are neural networks i.e. given an input $y = (t,x,\omega,\nu) \in \dom=\domain$, a feedforward neural network (also termed as a multi-layer perceptron), shown in figure \ref{fig:1}, transforms it to an output, through a layers of units (neurons) which compose of either affine-linear maps between units (in successive layers) or scalar non-linear activation functions within layers \cite{DLbook}, resulting in the representation,
\begin{equation}
\label{eq:ann1}
\bu_{\theta}(y) = C_K \circ A \circ C_{K-1}\ldots \ldots \ldots \circ A \circ C_2 \circ A \circ C_1(y).
\end{equation} 
Here, $\circ$ refers to the composition of functions and $A$ is a scalar (non-linear) activation function (applied to vectors componentwise). A large variety of activation functions have been considered in the machine learning literature \cite{DLbook}. Popular choices for the activation function $A$ in \eqref{eq:ann1} include the sigmoid function, the hyperbolic tangent function and the \emph{ReLU} function.

The affine map in the $k$-layer is given by,
\begin{equation}
\label{eq:C}
C_k z_k = W_k z_k + b_k, \quad {\rm for} ~ W_k \in \R^{d_{k+1} \times d_k}, z_k \in \R^{d_k}, b_k \in \R^{d_{k+1}}.
\end{equation}
For consistency of notation, we set $d_1 = \bar{d} = 2d+1$, for $d$-space dimensions and $d_K = 1$. 

Thus in the terminology of machine learning (see also figure \ref{fig:1}), our neural network \eqref{eq:ann1} consists of an input layer, an output layer and $(K-1)$ hidden layers for some $1 < K \in \N$. The $k$-th hidden layer (with $d_k$ neurons) is given an input vector $z_k \in \R^{d_k}$ and transforms it first by an affine linear map $C_k$ \eqref{eq:C} and then by a nonlinear (component wise) activation $\sigma$. A straightforward addition shows that our network contains $\left(2d+2 + \sum\limits_{k=2}^{K-1} d_k\right)$ neurons. 
We also denote, 
\begin{equation}
\label{eq:theta}
\theta = \{W_k, b_k\},~ \theta_W = \{ W_k \}\quad \forall~ 1 \leq k \leq K,
\end{equation} 
to be the concatenated set of (tunable) weights for our network. It is straightforward to check that $\theta \in \Theta \subset \R^M$ with
\begin{equation}
\label{eq:ns}
M = \sum\limits_{k=1}^{K-1} (d_k +1) d_{k+1}.
\end{equation}
 \begin{figure}[htbp]
\centering
\includegraphics[width=8cm]{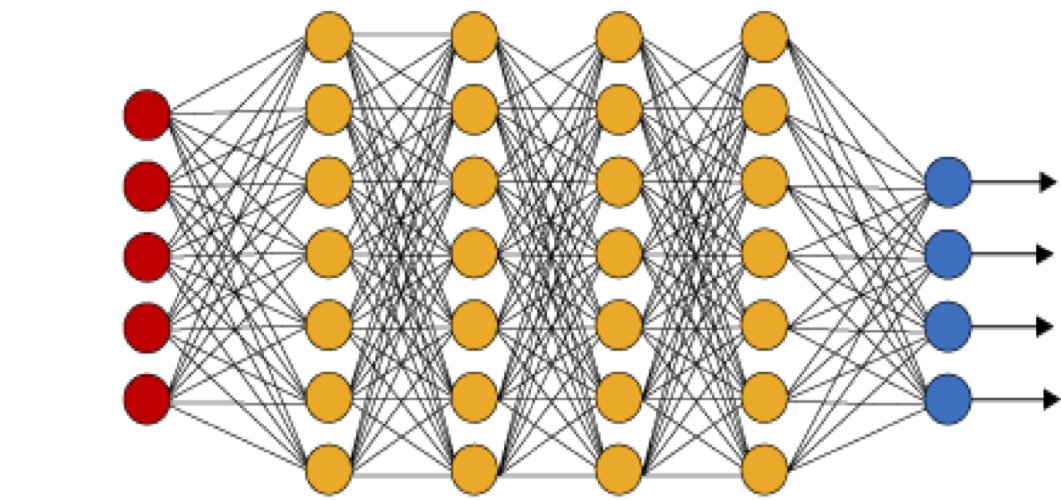}
\caption{An illustration of a (fully connected) deep neural network. The red neurons represent the inputs to the network and the blue neurons denote the output layer. They are
connected by hidden layers with yellow neurons. Each hidden unit (neuron) is connected by affine linear maps between units in different layers and then with nonlinear (scalar) activation functions within units.}
\label{fig:1}
\end{figure}
\subsection{Training PINNs: Loss functions and optimization}
\label{sec:24}
The neural network $u_{\theta}$ \eqref{eq:ann1} depends on the tuning parameter $\theta \in \Theta$ of weights and biases. Within the standard paradigm of \emph{deep learning} \cite{DLbook}, one \emph{trains} the network by finding tuning parameters $\theta$ such that the loss (error, mismatch, regret) between the neural network and the underlying target is minimized. Our target is the solution $u $ of the radiative transfer equation \eqref{eq:genRTE} and we wish to find the tuning parameters $\theta$ such that the resulting neural network $u_{\theta}$ approximates $u$. 

To do so, we follow \cite{Lag1,KAR1,MM1} and define the following \emph{PDE residual} $\res_{int,\theta} = \res_{int,\theta}(t,x,\omega,\nu)$, for all $(t,x,\omega,\nu) \in \domain$,
\begin{equation}
    \label{eq:res1}
    \res_{int,\theta}:= \frac{1}{c}\partial_t u_{\theta} +   \omega\cdot\nabla_x u_{\theta}  + ku_{\theta} + \sigma\Bigg(u_{\theta} - \frac{1}{s_d}
    \sum\limits_{i=1}^{N_S} w_i^S \Phi(\omega, \omega^S_i, \nu, \nu^S_i) u_{\theta}(t,x,\omega^{S}_i, \nu^{S}_i)\bigg) - f.
\end{equation}
Here, $k,\sigma,f$ are defined from \eqref{eq:genRTE} and $(\omega_i^S,\nu_i^S)$ are the Gauss-Legendre quadrature points, with quadrature weights $w_i$ of order $S$. 

We also need the following residuals for the initial and boundary conditions,
\begin{equation}
\label{eq:resb}
\begin{aligned}
    \res_{tb} &= \res_{tb,\theta}:= u_\theta - u_0, &\quad& \forall (x,\omega, \nu)\in D\times S\times \Lambda,\\
    \res_{sb} &= \res_{sb,\theta}:= u_\theta - u_b, &\quad& \forall (t, x,\omega, \nu)\in\Gamma_-.
\end{aligned}
\end{equation}

The strategy of PINNs, following \cite{KAR1,MM1}, is to minimize the \emph{residuals} \eqref{eq:res1} \eqref{eq:resb}, simultaneously over the admissible set of tuning parameters $\theta \in \Theta$ i.e 
\begin{equation}
    \label{eq:pinn1}
    {\rm Find}~\theta^{\ast} \in \Theta:\quad \theta^{\ast} = {\rm arg}\min\limits_{\theta \in \Theta} \left(
    \|\res_{int, \theta}\|^2_{L^2(\domain)} +
    \|\res_{sb,\theta}\|^2_{L^2(\Gamma_-)}+ 
    \|\res_{tb,\theta}\|^2_{L^2(D \times S \times \Lambda)}\right).
\end{equation}

However, the $L^2$ norms in \eqref{eq:pinn1} involve integrals that cannot be computed exactly and need to be approximated by suitable quadrature rules. It is exactly the place to recall the different training sets, introduced in section \ref{sec:22}. As these precisely correspond to the quadrature points of an underlying quadrature rule, we approximate the integrals in \eqref{eq:pinn1} with the corresponding quadrature rule to define the following loss function,
\begin{equation}
    \label{eq:lf1}
    J(\theta):= 
    \sum\limits_{j=1}^{N_{sb}} w_j^{sb} |\res_{sb,\theta}(z_j^{sb})|^{2}+ 
    \sum\limits_{j=1}^{N_{tb}} w_j^{tb} |\res_{tb,\theta}(z_j^{tb})|^{2} +
    \lambda\sum\limits_{j=1}^{N_{int}} w^{int}_j |\res_{int,\theta}(z_j^{int})|^{2}
\end{equation}
with the residuals $\res_{sb}, \res_{tb}$ and $\res_{int}$ defined in \eqref{eq:resb}, \eqref{eq:res1},
and $w^{sb},z^{sb}$, $w^{sb},z^{sb}$, $w^{int},z^{int}$ being  the quadrature weights and training points, defined in section \ref{sec:22}. Furthermore, $\lambda$ is a hyperparameter for balancing the residuals, on account of the PDE and the initial and boundary data, respectively.

It is common in machine learning \cite{DLbook} to regularize the minimization problem for the loss function i.e we seek to find,
\begin{equation}
\label{eq:lf2}
\theta^{\ast} = {\rm arg}\min\limits_{\theta \in \Theta} \left(J(\theta) + \lambda_{reg} J_{reg}(\theta) \right).
\end{equation}  
Here, $J_{reg}:\Theta \to \R$ is a \emph{weight regularization} (penalization) term. A popular choice is to set  $J_{reg}(\theta) = \|\theta_W\|^q_q$ for either $q=1$ (to induce sparsity) or $q=2$. The parameter $0 \leq \lambda_{reg} \ll 1$ balances the regularization term with the actual loss $J$ \eqref{eq:lf1}. 

The above minimization problem amounts to finding a minimum of a possibly non-convex function over a subset of $\R^M$ for possibly very large $M$. We will follow standard practice in machine learning and solving this minimization problem approximately by either (first-order) stochastic gradient descent methods such as ADAM \cite{adam} or even higher-order optimization methods such as different variants of the LBFGS algorithm \cite{lbfgs}. 

For notational simplicity, we denote the (approximate, local) minimum in \eqref{eq:lf2} as $\theta^{\ast}$ and the underlying deep neural network $u^{\ast}= u_{\theta^{\ast}}$ will be our physics-informed neural network (PINN) approximation for the solution $u$ of the PDE \eqref{eq:genRTE}. We summarize the PINN algorithm for approximating radiative transfer below,
\begin{algorithm} 
\label{alg:PINN} {Finding a physics informed neural network (PINN) to approximate the radiative intensity $u$ solving \eqref{eq:genRTE}}. 
\begin{itemize}
\item [{\bf Inputs}:] Underlying domain $\domain$, coefficients and data for the radiative transfer equation \eqref{eq:genRTE}, quadrature points and weights for underlying quadrature rules, non-convex gradient based optimization algorithms.
\item [{\bf Goal}:] Find PINN $u^{\ast}= u_{\theta^{\ast}}$ for approximating the solution $u$ of \eqref{eq:genRTE} .
\item [{\bf Step $1$}:] Choose the training sets as described in section \ref{sec:22}. 
\item [{\bf Step $2$}:] For an initial value of the weight vector $\overline{\theta} \in \Theta$, evaluate the neural network $u_{\overline{\theta}}$ \eqref{eq:ann1}, the PDE residual \eqref{eq:res1}, the boundary residuals \eqref{eq:resb}, the loss function \eqref{eq:lf2} and its gradients to initialize the underlying optimization
algorithm.
\item [{\bf Step $3$}:] Run the optimization algorithm till an approximate local minimum $\theta^{\ast}$ of \eqref{eq:lf2} is reached. The map $u^{\ast} = u_{\theta^{\ast}}$ is the desired PINN for approximating the solution $u$ of the radiative transfer equation.
\end{itemize}
\end{algorithm}
\subsection{Estimates on the generalization error}
\label{sec:25}
For the sake of definiteness and simplicity, we consider the spatial domain as $D = [0,1]^d$, with $d$ being the spatial dimension. Any rectangular domain $\prod\limits_{i=1}^d [a_i,b_i]$, with $a_i < b_i$, for any $a_i,b_i \in \R$ can be mapped to $[0,1]^d$ by rescaling. Similarly, logically (patch or block) cartesian domains can be transformed to $(0,1)^d$ by combinations of coordinate transforms. We also rescale time and frequency to set $T=1$ and $\Lambda = [0,1]$. Finally, the angular domains can be mapped onto to $[0,1]^{d-1}$ by rescaling the underlying polar coordinates. Hence, the underlying domain is $\dom = \domain = [0,1]^{2d+1}$. Thus, we can choose our interior training points $\train_{int}$, temporal boundary training points $\train_{tb}$ and spatial boundary training points $\train_{sb}$ as low-discrepancy Sobol points \cite{CAF1}. 

Our aim in this section is to derive a rigorous estimate on the so-called \emph{generalization error} (or approximation error) for the trained neural network $u^{\ast} = u_{\theta^{\ast}}$, which is the output of the PINNs algorithm \ref{alg:PINN}. This error is of the form,
\begin{equation}
    \label{eq:gerr}
\er_{G} = \er_{G}(\theta^{\ast}):=\left(\int\limits_{\dom}|u(t,x,\omega,\nu) - u^{\ast}(t,x,\omega,\nu)|^2 dz\right)^{\frac{1}{2}},
\end{equation}
with $dz = dx dt d\omega d\nu$ denoting the volume measure on $\dom$. 

We follow the recent paper \cite{MM1} and estimate the generalization error \eqref{eq:gerr}, in terms of \emph{training errors},
\begin{equation}
    \label{eq:etrain}
    \er_T^{sb}:= 
   \left(\sum\limits_{j=1}^{N_{sb}} w_j^{sb} |\res_{sb,\theta^{\ast}}(z_j^{sb})|^{2}\right)^{\frac{1}{2}}, \quad 
   \er_T^{tb}:=
    \left(\sum\limits_{j=1}^{N_{tb}} w_j^{tb} |\res_{tb,\theta^{\ast}}(z_j^{tb})|^{2}\right)^{\frac{1}{2}}, \quad 
    \er_T^{int}:= \left(\sum\limits_{j=1}^{N_{int}} w^{int}_j |\res_{int,\theta^{\ast}}(z_j^{int})|^{2}\right)^{\frac{1}{2}} 
\end{equation}
Note that the training errors, defined above, correspond to a local minimizer $\theta^{\ast}$ of \eqref{eq:lf2} and are readily computable from the loss function \eqref{eq:lf2}, during and at the end of the training process. 

The detailed estimate on the generalization error in Lemma \ref{lem:1}, together with the assumptions on the underlying coefficients, functions and neural network, is presented and proved in Appendix \ref{sec:a1}. We direct the interested reader to the appendix and focus on the following form of the error estimate \eqref{eq:egbd},
\begin{equation}
    \label{eq:egbd1}
    \begin{aligned}
    (\er_G)^2 &\leq C_1\left((\er_T^{tb})^2 + c(\er_T^{sb})^2 + c(\er_T^{int})^2\right) \\
    &+ C_2\left(\frac{(\log(N_{tb}))^{2d}}{N_{tb}} + c \frac{(\log(N_{sb}))^{2d}}{N_{sb}}+ c \frac{(\log(N_{int}))^{2d+1}}{N_{int}} + c N^{-2s}_{S}\right), 
    \end{aligned}
\end{equation}
with finite constants $C_1= C$, $C_2=CC^{\ast}$ defined in \eqref{eq:c}. The following remarks about the bound \eqref{eq:egbd1} are in order,
\begin{remark}
The estimate \eqref{eq:egbd1} bounds the generalization error in terms of the training errors defined in \eqref{eq:etrain} and the number of training points $N_{int,sb,tb}$ as well as quadrature points $N_{S}$ for approximating the scattering integral in \eqref{eq:genRTE}. Although we have no apriori estimate on the training errors, as argued in \cite{MM1}, these errors can readily calculated after the training process has completed. Thus, the estimate \eqref{eq:egbd1} tells us that under the assumptions that the constants appearing in \eqref{eq:egbd1} are finite, \emph{as long as the PINN is trained well, it generalizes well}. This is exactly in the spirit of generalization results in theoretical machine learning \cite{MLbook2}.
\end{remark}
\begin{remark}
We see from the right hand side of the bound \eqref{eq:egbd1} that the dimensional dependence of the upper bound is only a logarithmic factor. This is not a severe restriction in this case, as the spatial dimension $d$ is atmost $3$. It is well known \cite{CAF1} that the logarithmic factor in the rhs of \eqref{eq:egbd1} starts affecting the rate of decay only when $N_{int} < 2^{2d+1}$. Thus as long as $N_{int} > 128$ and $N_{tb},N_{tb} > 64$, we should see a linear decay in the error contributions of the Sobol points in \eqref{eq:egbd1}. Hence, we claim that as long as the training errors do not depend on the underlying dimension, the estimate \eqref{eq:egbd1} suggests that the PINNs algorithm \ref{alg:PINN} will \emph{not suffer from a curse of dimensionality}. 
\end{remark}
\begin{remark}
The estimate \eqref{eq:egbd1} brings out the role of the speed of light $c$ very clearly. As long as $c$ is finite, we can rescale time to set $c=1$. Nevertheless, the constant $C$ in \eqref{eq:egbd1} grows exponentially with the rescaled time, deteriorating the control on the error provided by the bound \eqref{eq:egbd1}. Thus, this bound is not suitable for steady-state problems (formally) obtained by letting $c \rightarrow \infty$. Nevertheless, a modified error estimate can be derived for the steady state case and we present it in the appendix \ref{sec:a2}.
\end{remark}
\section{Numerical Experiments}
\label{sec:3}
\subsection{Implementation}
The PINNs algorithm \ref{alg:PINN} has been implemented within the PyTorch framework \cite{torch} and the code can be downloaded from \url{https://github.com/mroberto166/PinnsSub}. As is well documented \cite{KAR1,KAR2,MM1}, the coding and implementation of PINNs is extremely simple, particularly when compared to standard methods such as finite elements. Only a few lines of Python code suffice for this purpose. All the numerical experiments were performed on a single GeForce GTX1080 GPU. 

The PINNs algorithm has the following hyperparameters, the number of hidden layers $K-1$, the width of each hidden layer $d_k\equiv \tilde{d}$ in \eqref{eq:ann1}, the specific activation function $A$, the parameter $\lambda$ in the loss function \eqref{eq:lf1}, the regularization parameter $\lambda_{reg}$ in the cumulative loss function \eqref{eq:lf2} and the specific gradient descent algorithm for approximating the optimization problem \eqref{eq:lf2}. We use the hyperbolic tangent $\tanh$ activation function, thus ensuring that all the smoothness hypothesis on the resulting neural networks, as required in lemmas \ref{lem:1} and \ref{lem:2} are satisfied. Moreover, we use the second-order LBFGS method \cite{lbfgs} as the optimizer. We follow the ensemble training procedure of \cite{LMR1} in order to choose the remaining hyperparameters. To this end, we consider a range of values, shown in Table \ref{tab:1}, for the number of hidden layers, the depth of each hidden layer, the parameter $\lambda$ and the regularization parameter  $\lambda_{reg}$. For each configuration in the ensemble, the resulting model is retrained (in parallel) $n_\theta$ times with different random starting values of the trainable weights in the optimization algorithm and the one yielding the smallest value of the training loss is selected.

\begin{table}[htbp] 
    \centering
    \renewcommand{\arraystretch}{1.1} 
    
    \footnotesize{
        \begin{tabular}{ c c c c c c c c} 
            \toprule
              & \bfseries $K-1$  & \bfseries $\tilde{d}$  &\bfseries $\lambda$ &\bfseries $\lambda_{reg}$ & \bfseries $n_\theta$\\ 
            \midrule
            \midrule
             Example \ref{sec:32}, \ref{sec:33}& 4, 8  & 16, 20, 24 & 0.1, 1, 10 &0 &5\\
              \midrule
              Example \ref{sec:34} &4, 8  & 16, 20 & 0.1, 1 & 0, $10^{-6}$, $10^{-5}$ &10\\
              \midrule
              Example \ref{sec:35}&4, 8, 12, 16, 20  & 16, 20, 24, 28, 32, 36, 40 & 0.1, 1 & 0 &20\\
              \midrule
              Example \ref{sec:42}&4, 8  & 16, 20, 24 & 1, 10 & 0 & 5\\
            \bottomrule
        \end{tabular}
    \caption{Hyperparameter configurations and number of retrainings employed in the ensemble training of PINNs for the radiative transfer equation \eqref{eq:genRTE}}
        \label{tab:1}
    }
\end{table}
\subsection{Monochromatic stationary radiative transfer in one space dimension}

\label{sec:32}
We begin with the much simpler case of steady state radiative transfer in the one space dimension, also referred to as slab geometry \cite{Fran}. In this case, the radiative transfer equations \eqref{eq:genRTE} simplify to,
\begin{equation}
\label{eq:1drad}
\begin{aligned}
\mu\frac{\partial }{\partial x}u(x,\mu)  &+\Big(\sigma(x) + k(x)\Big) u(x,\mu) = \frac{\sigma(x)}{2}\int_{-1}^{1}   \Phi(\mu, \mu')u(x,\mu')d\mu',\quad \mu=\cos(\theta), \quad  (x,\mu) \in [0,1]\times[-1,1].\\
\end{aligned}
\end{equation}
We follow the setup of \cite{PON} where the authors benchmarked least squares finite element methods for one-dimensional radiative transfer on this problem. As in \cite{PON}, the following \emph{inflow} boundary conditions are imposed:
\begin{equation}
\begin{aligned}
u(0,\mu)&=1, \quad \mu\in(0,1],\\
u(1,\mu)&=0, \quad \mu\in[-1,0).\\
\end{aligned}
\end{equation}
Note that the boundary conditions allow for possible discontinuities at $\mu=0$. The coefficients and scattering kernel are,
\begin{equation}
    \sigma(x)=x, \quad k(x)=0, \quad \Phi(\mu',\mu)= \sum_{\ell=0}^L d_\ell P_\ell(\mu)P_\ell(\mu'), \quad d_0=1,
\end{equation}
with $P_\ell(\mu)$  denoting the Legendre polynomial of order $\ell$. We employ the sequence of coefficients $d_\ell= \{1.0, 1.98398, 1.50823, 0.70075, 0.23489, 0.05133, 0.00760, 0.00048\}$, proposed in \cite{PON}. Although only in $2$ dimensions, this problem is nevertheless considered rather challenging on account of the possible presence of discontinuities. 
\begin{table}[htbp] 
    \centering
    \renewcommand{\arraystretch}{1.1} 
    
    \footnotesize{
        \begin{tabular}{ c c  c c  c c c c c c} 
            \toprule
              $N_{int}$  & $N_{sb}$   &\bfseries $K-1$ & \bfseries $\bar{d}$ &\bfseries $\lambda$&  $\er_T$ & $||u_- - u_-^{\ast}||_{L^2}$ & $||u_+ - u_+^{\ast}||_{L^2}$ &  $||u- u^{\ast}||_{L^2}$ &  Training Time    \\ 
            \midrule
            \midrule
            8192    & 2048  &8 &24& 0.1  & 0.00015 & 1.1$\%$ & 1.2$\%$ & 0.24$\%$& 20~min \\
            \bottomrule
        \end{tabular}
        \caption{Results for monochromatic stationary radiative transfer in one space dimension. }
        \label{tab:rad1d}
    }
\end{table}
We use the PINNs algorithm \ref{alg:PINN} to approximate \eqref{eq:1drad}, with Sobol points for the interior training set $\train_{int}$ and spatial boundary training set $\train_{sb}$. Similarly, a Gauss-Legendre quadrature rule with $N_S =10$ quadrature points is used for approximating the integral with the scattering kernel. We also set $N_{int}=8192$, $N_{sb} = 2048$, for this experiment. The hyperparameters that resulted from the ensemble training are presented in Table \ref{tab:rad1d}. As seen from the table, a very low training error is obtained in this case, together with a comparably low generalization error of $0.24\%$. A contour plot of the resulting radiative intensity in $(x,\mu)$-plane is presented in figure \ref{fig:cont_1d}. The results are very similar to those obtained with a finite element solver. It is interesting to note that this very good match with the finite element method is obtained with a training time of $20$ minutes on a CPU. 
\begin{figure}[ht]
        \centering
        \includegraphics[draft=False,width=1\linewidth]{{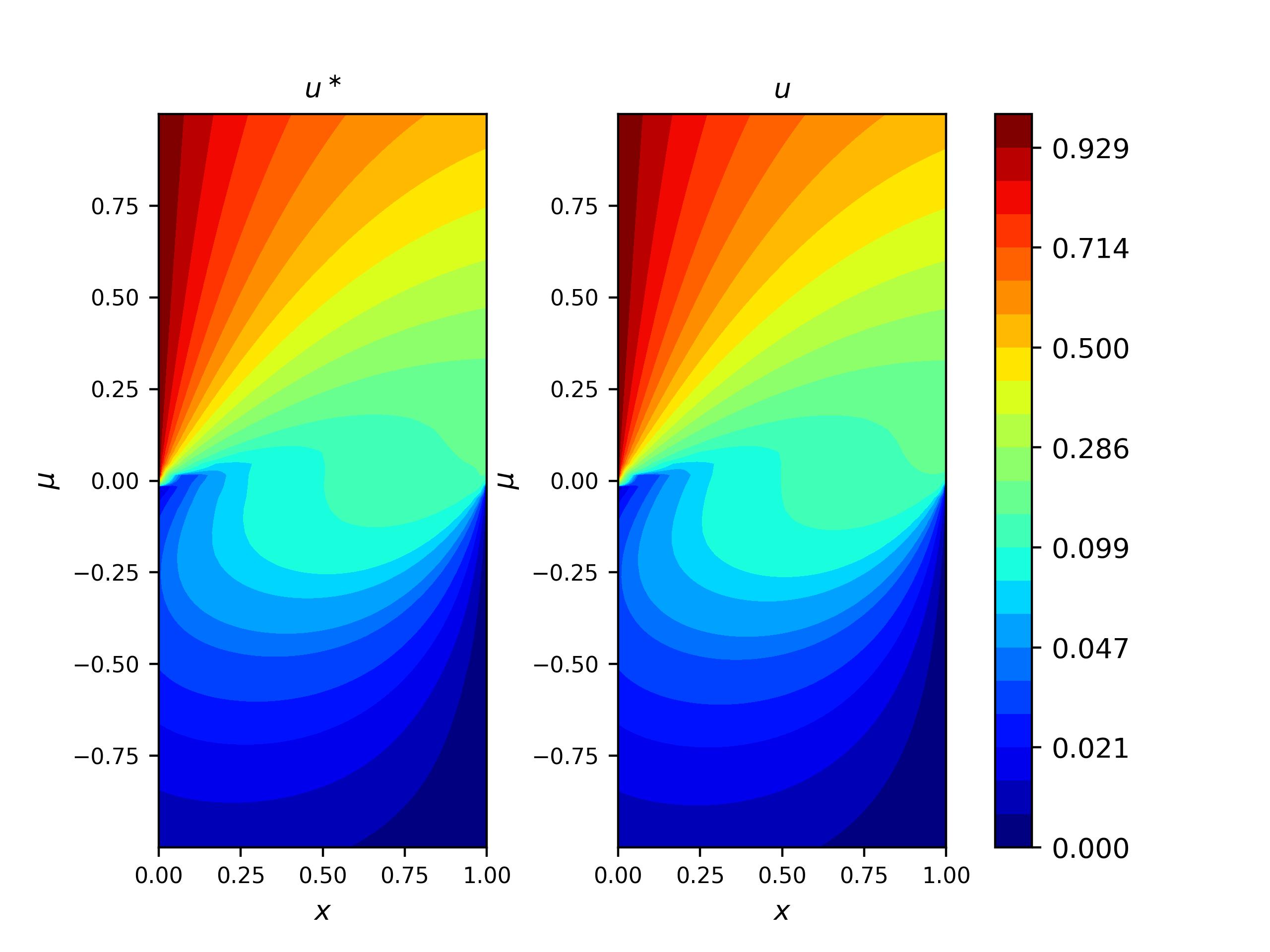}}
   
    \caption{Contour plot of the PINN radiative intensity $u^\ast(x,\mu)$ for the 1D monochromatic experiment (left), compared with the solution $u(x,\mu)$ obtained with a finite element solver (right)}
\label{fig:cont_1d}
\end{figure}

Another attractive feature of this simplified problem lies in the fact that the authors in \cite{CEN} obtained an exact analytical solution for it. Although it is very complicated to evaluate this solution for the whole $(x,\mu)$-plane, its values on the boundaries can be readily evaluated i.e. we can readily compute $u_{-}(\mu) =  u(0,\mu)$ and $u_+(\mu) = u(1,\mu)$. We do so and compare the exact solution with the trained PINN, denoted by $u^{\ast}_{\pm}$. These results are plotted in figure \ref{fig:bound_int}. We see from this figure that the PINN is able to very accurately approximate the discontinuous exact solution at the boundary. A quantitative comparison in performed by computed the errors $u_{\pm} - u^{\ast}_{\pm}$ in $L^2$-norm. These errors, presented in table \ref{tab:rad1d}, are very small for both boundaries and further demonstrate that the PINN is able to approximate the underlying discontinuous solution to high-accuracy, at very low computational cost. One can possibly further reduce the error by using adaptive activation functions as suggested in \cite{jag3}. However, we did not observe any further reduction in the already low training error by using adaptive activation functions.  
\begin{figure}[ht!]

    \begin{subfigure}{.49\textwidth}
        \centering
        \includegraphics[width=1\linewidth]{{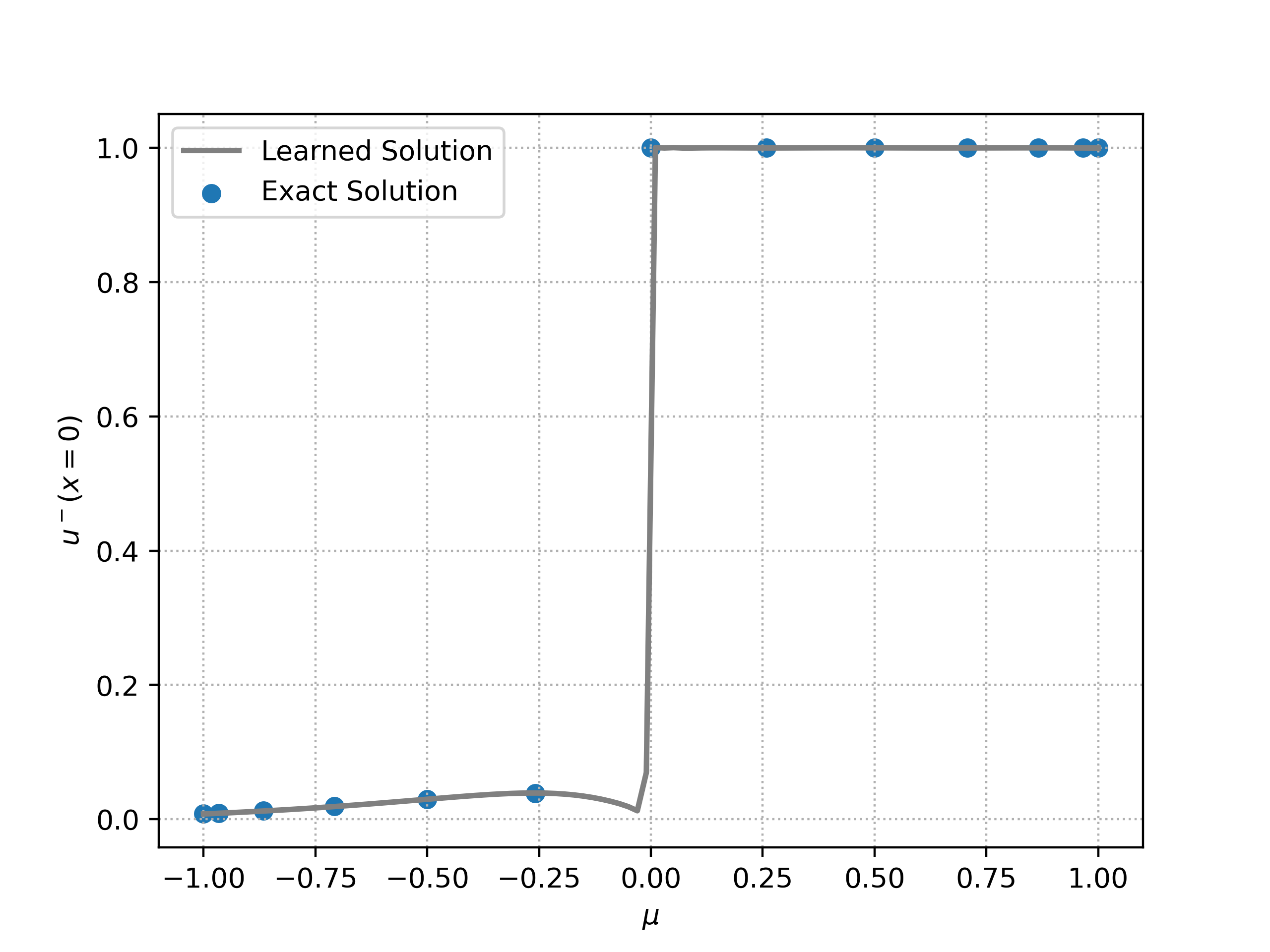}}
    \end{subfigure}
    \begin{subfigure}{.49\textwidth}
        \centering
        \includegraphics[width=1\linewidth]{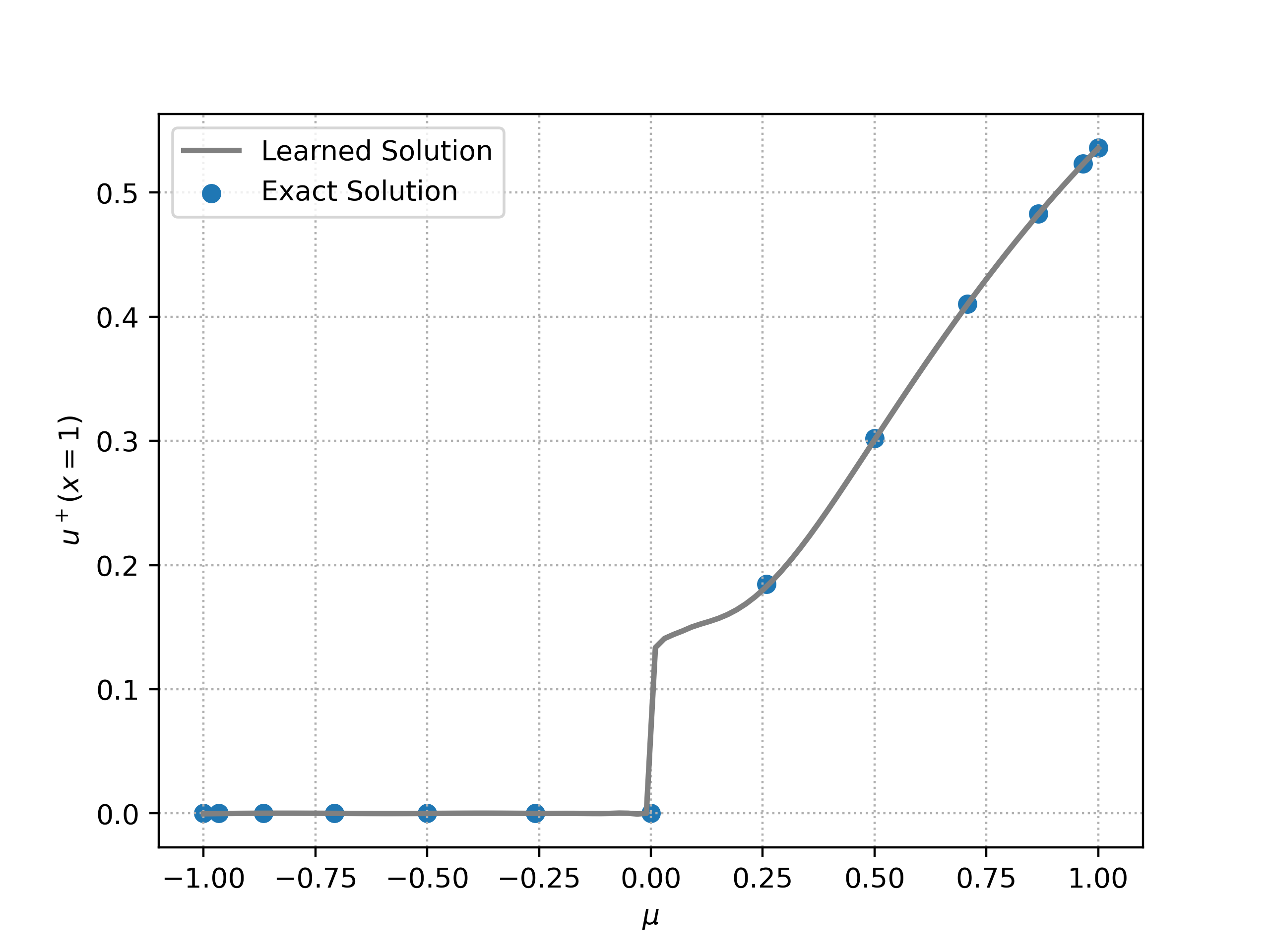}
    \end{subfigure}
    \caption{Comparison of the analytical and PINNN radiative  intensity at the physical domain boundaries for the stationary monochromatic radiative transfer in one-space dimension.}
\label{fig:bound_int}
\end{figure}
\subsection{Monochromatic stationary radiative transfer in three space dimensions} 
\label{sec:33}
Next, we consider a monochromatic and stationary version of the general radiative transfer equations \eqref{eq:genRTE}, but in three space dimensions. Already, this problem is in $5$ dimensions and is challenging on account of possibly high computational cost. We use the same setup as in \cite{GRELLA} (section 8.2, experiment 3) and consider the problem in the unit cube $D=[0,1]^3$ where a source, located at the center $c = (0.5, 0.5, 0.5)$, radiates into the surrounding medium.  We consider no further radiation entering the domain (zero Dirichlet boundary conditions). The source term $f$ is given by
\begin{equation}
    f(x) = k(x)I_b(x),\quad I_b(x) = \left\{
        \begin{array}{ll}
            0.5 -r, & \quad r \leq 0.5  \\
            0, & \quad \text{otherwise}
        \end{array}
    \right.
\end{equation}
with $r = |x -c|$. The absorption coefficient is $k(x)=I_b(x)$ and isotropic scattering $\Phi = 1$, with unit scattering coefficient $\sigma(x)=1$ is considered.

As before, we use Sobol points for the interior training set $\train_{int}$ and boundary training set $\train_{sb}$. Quadrature points, corresponding to a {Gauss} quadrature rule of order 20 are also used. We set $N_{int}=16384$, $N_{sb}=12288$ and $N_S=100$. The hyperparameters, corresponding to the best performing networks, that result from ensemble training are presented in Table \ref{tab:rad3d}. We see from this table that this hyperparameter configuration resulted in a very low (total) training error of $4.4 \times 10^{-4}$, which is comparable to those obtained in the one-space dimension case (see table \ref{tab:rad1d}). 
\begin{table}[htbp] 
    \centering
    \renewcommand{\arraystretch}{1.1} 
    
    \footnotesize{
        \begin{tabular}{ c c  c c  c  c c} 
            \toprule
              $N_{int}$  & $N_{sb}$   &\bfseries $K-1$ & \bfseries $\tilde{d}$ &\bfseries $\lambda$&  $\er_T$ & Training Time  \\ 
            \midrule
            \midrule
            16384    & 12288  &8 &24& 0.1   & 0.00044 & 1~hr~9~min \\
            \bottomrule
        \end{tabular}
        \caption{Results of the ensemble training for the stationary monochromatic radiative transfer in three space dimensions.}
        \label{tab:rad3d}
    }
\end{table}
\par As there is no analytical solution available for the radiative intensity in this case, we cannot compute generalization errors. However, based on the theory (see estimate \eqref{eq:egbd}) and on the comparison with the one-dimensional case, we expect very low generalization errors when the training errors are this low. Moreover, we can perform qualitative comparisons with the results obtained in \cite{GRELLA} with an efficient discrete ordinate method. To this end, we plot three-dimensional volume plot for the \emph{incident radiation} $G(x)$ (see the first equation in \eqref{eq:inc_rad} for definition) in figure \ref{fig:cont_3d}. We see from this figure that the results with PINN are very similar to the results with the discrete ordinate method, shown in \cite{GRELLA} (figure 8.12, page 126). Thus, we are able to approximate the incident radiation to the same accuracy as a discrete ordinate method. The main differences lies in the simplicity of implementation and very low computational cost. We observe from table \ref{tab:rad3d} that the PINN was trained in approximately $70$ minutes on a single GPU. This should be contrasted with the very intricate parallel algorithm of \cite{GRELLA}, which required considerably more computational time as the method resulted in very number of degrees of freedom ranging from $200000-600000$. 
\begin{figure}[ht!]
        \centering
        \includegraphics[width=0.7\linewidth]{{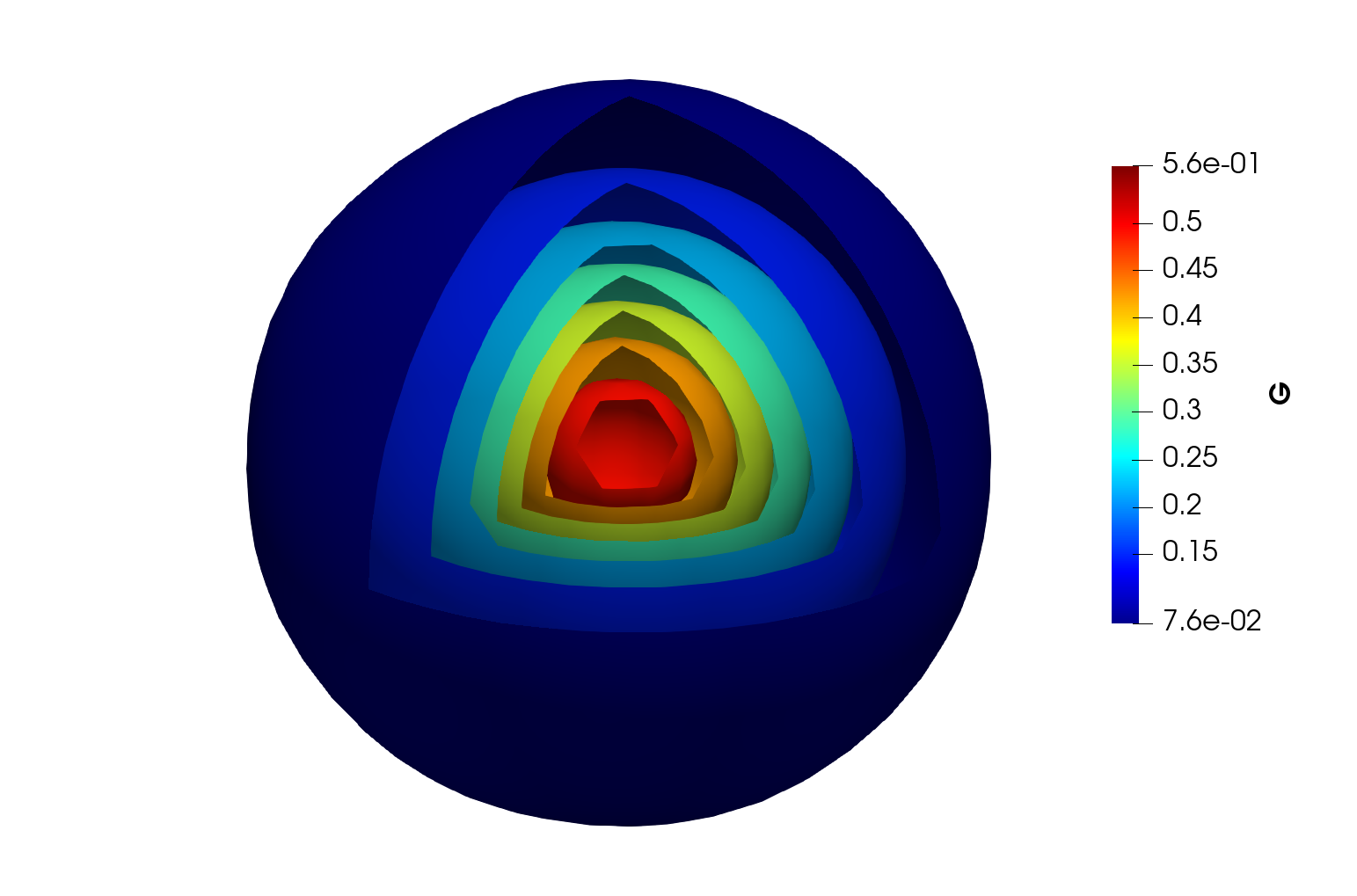}}
    \caption{Contour plot of the incident radiation $G(x)$ for the 3D monochromatic experiment}
\label{fig:cont_3d}
\end{figure}
\subsection{Polychromatic stationary radiative transfer in three space dimensions}
\label{sec:34}
Next, we consider the most general case of the steady state radiative transfer equation \eqref{eq:sRTE} by following the setup of \cite{RAN1} and references therein, where \eqref{eq:sRTE} is considered in the unit cube $D=[0,1]^3$ and in the frequency domain $\Lambda=[-6,6]$, with normalization of energy groups. Furthermore,  we consider a simple case of zero absorption, isotropic kernel, zero Dirichlet boundary conditions and spherical symmetry. Under the assumptions, by integrating equation \eqref{eq:sRTE} over the unit sphere $S$, we arrive at the following ordinary differential equation for the radial flux i.e the incident heat flux \eqref{eq:inc_rad} along the radius, 
\begin{equation}
    \nabla\cdot F_r = \frac{1}{r^2}\frac{d}{dr}r^2F_r = 4\pi f(r,\nu)
\end{equation}
with $r= |x-(0.5,0.5,0.5)|$ (see also \cite{RAN1} and references therein).

An exact solution for the above ODE can be easily obtained. In particular, with the source term:
\begin{equation}
    f(x,\nu)= 
     \begin{cases}
       \sqrt{\pi}\phi(\nu)\Big(1 - 2r\Big) &\quad\text{if } r\leq 0.5, \\
       0 &\quad\text{otherwise},\\
     \end{cases} \quad\phi(\nu) = \frac{1}{\sqrt{\pi}}\exp{\big(-\nu^2\big)},
\end{equation}
the radial flux $F_r$ results in
\begin{equation}
\label{eq:rf}
    F_r = 
     \begin{cases}
       4\sqrt{\pi^3}\phi(\nu)\Big( \frac{r}{3} -\frac{r^2}{2}\Big) &\quad\text{if } r\leq 0.5, \\
        4\sqrt{\pi^3}\phi(\nu)\frac{1}{96r^2} &\quad\text{otherwise}.\\
     \end{cases}
\end{equation}
As in the previous numerical experiment, we use Sobol points for the interior and boundary training sets and Gauss quadrature points for integrating the scattering kernel, with $N_{int} = 16384$, $N_{sb}=12288$ and $N_S = 100$. The hyperparameters used in the ensemble trainig are reported in table \ref{tab:1} and the resulting best performing configuration is shown in table \ref{tab:flux_res}. We observe from this table that the resulting training error is $1.6 \times 10^{-3}$, which is about three times higher than the training error with the monochromatic experiment (see table \ref{tab:rad3d}). This is not surprising as the underlying problem is more complicated on account of introducing frequency as an additional variable and resulting in a $6$-dimensional problem. 

As no analytical solution is available for the radiative intensity, we cannot compute the generalization error \eqref{eq:gerrs}. However, we can compute the error between the analytical radial flux \eqref{eq:rf} and the PINN approximation (computed from the intensity with a Gauss-Legendre quadrature rule). We show the resulting $L^2$-norm of the error in table \ref{tab:flux_res}. We see from this table that the error for the flux is quite low at approximately $2\%$ relative error, even for this rather complicated underlying problem. Moreover, the training time is only one hour. Thus, PINNs are able to approximate the underlying solution to high accuracy at low computational cost for this $6$-dimensional problem. 
\begin{table}[htbp] 
    \centering
    \renewcommand{\arraystretch}{1.1} 
    
    \footnotesize{
        \begin{tabular}{ c c   c c c c  c c } 
            \toprule
              $N_{int}$  & $N_{sb}$   &\bfseries $K-1$ & \bfseries $\tilde{d}$  &\bfseries $\lambda$&  $\er_T$ & $||F_r - F_r^\ast||_{L^2}$ & Training Time    \\ 
            \midrule
            \midrule
            16384    & 12288 &8 &20&0.1   & 0.0016& 2.1 $\%$ & 1~hr~6~min\\
            \bottomrule
        \end{tabular}
        \caption{Results for steady polychromatic radiative transfer in three space dimensions}.
        \label{tab:flux_res}
    }
\end{table}
\subsection{Polychromatic time-dependent Radiative transfer in three space dimensions}
\label{sec:35}
For the final numerical experiment, we consider the configuration proposed in \cite{Graz}, which is widely used in benchmarking the radiative transport modules in production codes for radiation-(magneto)hydrodynamics, in the context of Astrophysics \cite{Bell}. The setup is as follows;  a sphere with radius $R_i$ and fixed temperature $T_S$ is surrounded by a cold static medium at temperature $T_m< T_S$. The experiment might represent, for instance, the model of a star radiating in the surrounding atmosphere. It is assumed that the sphere, as well as the surrounding medium, are emitting with a Planckian distribution 
\begin{equation}
    B(T, \nu) = \frac{2h\nu^3}{c^2}\frac{1}{e^{\frac{h\nu}{k_b T}} -1}
\end{equation}
with $h$ and $k_b$ being the Planck and Boltzmann constant, and $c$ the speed of light.

To make the problem tractable, the authors of \cite{Graz} neglect scattering entirely by setting $\sigma \equiv 0$. Moreover, the absorption coefficient is modeled by $k(x,\nu) = k_{\nu}$, with $\nu$ being the frequency. The emission term is modeled by $f(x,\nu) = k_{\nu}B(T_m,\nu)$, resulting in the following form of the radiative transfer equation \eqref{eq:genRTE},
\begin{equation}
\label{eq:RTE_time_red}
\begin{aligned}
 \frac{1}{c}\frac{\partial u}{\partial t}  + \omega\cdot\nabla_x u  = k_\nu (B(T_m, \nu) - u),\quad (t,x,n,\nu)\in D_T \times S\times\Lambda.
\end{aligned}
\end{equation}

In the context of radiation-(magneto)hydrodynamics, one is mostly interested in the angular moments of the radiative intensity that naturally arise in calculating the contridbution of radiation to the total energy of the fluid (plasma). Hence, it is customary to integrate \eqref{eq:RTE_time_red} over the sphere $S$ to derive the following PDE for incident radiation \eqref{eq:inc_rad}:
\begin{equation}
\label{eq:RTE_G}
\begin{aligned}
 \frac{1}{c}\frac{\partial }{\partial t}G  + \nabla_x \cdot F  = k_\nu\Big(b(T_m, \nu) - G\Big),\quad (t,x,\nu)\in D_T \times\Lambda.
\end{aligned}
\end{equation}
with $b(T, \nu)= 4\pi B(T, \nu)$.

However, the PDE \eqref{eq:RTE_G} is not closed and one needs a closure for the flux $F$ in terms of the incident radiation $G$. It is common practice in astrophysics to use the so-called \emph{diffusion approximation} of the flux
\cite{CAST}:
\begin{equation}
\label{eq:diff_approx}
 F(t, x,\nu) = -\frac{1}{3k_\nu}\nabla  G(t, x, \nu),
\end{equation}
resulting in the following PDE,
\begin{equation}
\label{eq:RTE_G_approx}
\begin{aligned}
 \frac{1}{c}\frac{\partial }{\partial t}G  -\frac{1}{3k_\nu}\Delta  G  = k_\nu\Big(b(T_m, \nu) - G\Big), \quad (t,x,\nu)\in D_T \times\Lambda.
\end{aligned}
\end{equation}
Defining the \emph{Knudsen number} $K= Lk_{\nu}$ (with $L$
being a characteristic length scale), it is well known that the diffusion approximation is justified in the limit of $K \rightarrow \infty$. 

\begin{table}[htbp] 
    \centering
    \renewcommand{\arraystretch}{1.1} 
    
    \footnotesize{
        \begin{tabular}{ c c  c c c c c  c c c} 
            \toprule
                $k_\nu$&$N_{int}$  & $N_{sb}$ & $N_{tb}$  &\bfseries $K-1$ & \bfseries $\tilde{d}$  &\bfseries $\lambda$&  $\er_T$ & Training Time   \\ 
            \midrule
            \midrule
            1&16384    & 12288 & 12288 &4 &40&0.1   & 0.0028 & 3~hr~25~min\\
             \midrule
            10 &16384    & 12288 & 12288 &4 &40&0.1   & 0.012 & 2~hr~15~min\\
            \bottomrule
        \end{tabular}
        \caption{Results for polychromatic time-dependent radiative transfer in three space dimensions.}
        \label{tab:ast_res}
    }
    \end{table}

Although the PDE \eqref{eq:RTE_G_approx} is simpler than the full radiative transfer equation \eqref{eq:genRTE}, efficient numerical approximation of \eqref{eq:RTE_G_approx} is still quite challenging as the incident radiation is a function of $5$ variables. As it happens, PINNs provide an efficient method for approximating high-dimensional parabolic equations such as \eqref{eq:RTE_G_approx}, see \cite{MM1} section 3 for details.

However, by assuming radial symmetry and with the flux approximation given in \eqref{eq:diff_approx}, the differential equation \eqref{eq:RTE_G_approx}  admits an analytical solution satisfying the  initial and boundary conditions 
\begin{equation}
\begin{aligned}
    G(0,r,\nu) = b(T_m, \nu), \\
    G(t, r\rightarrow \infty,\nu) = b(T_m, \nu), \\
    G(0,R_i,\nu) = b(T_s, \nu). \\
\end{aligned}
\end{equation}
The exact solution for \eqref{eq:RTE_G_approx} then reads \cite{Graz}, 
 \begin{equation}
 \label{eq:ex}
 \begin{aligned}
  G(t,r,\nu) =& b(T_m, \nu) + \frac{R_i}{r}\Big( b(T_s, \nu) - b(T_m, \nu)\Big)F(t, r, \nu),\\
  F(t,r,\nu) =& \frac{1}{2}\exp{(-3k_\nu(r-R))}\Bigg\{  Erfc\bigg( \sqrt{\frac{3k_\nu}{4ct}}(r- R) - \sqrt{k_\nu ct} \bigg) +  Erfc\bigg( \sqrt{\frac{3k_\nu}{4ct}}(r- R) + \sqrt{k_\nu ct} \bigg) \Bigg\} .\\
 \end{aligned}
 \end{equation}
 
 For this numerical experiment, we will approximate the full time-dependent radiative transfer equations \eqref{eq:RTE_time_red} with the PINNs algorithm \ref{alg:PINN}. To this end, we consider \eqref{eq:RTE_time_red} in the spatial domain $D$ enclosed between two spheres with radii $R_i=2$ and $R_e=4$. Moreover, we introduce an auxiliary temporal variable $\tau = ct$ to rescale time to $[0,1]$, whereas the energy group ranges between $10^{15}$ and $10^{18}$.  We set $T_s= 150 eV$ and $T_m=120eV$.
 
 \begin{figure}[ht!]
 \begin{subfigure}{.48\textwidth}
        \centering
        \includegraphics[width=1\linewidth]{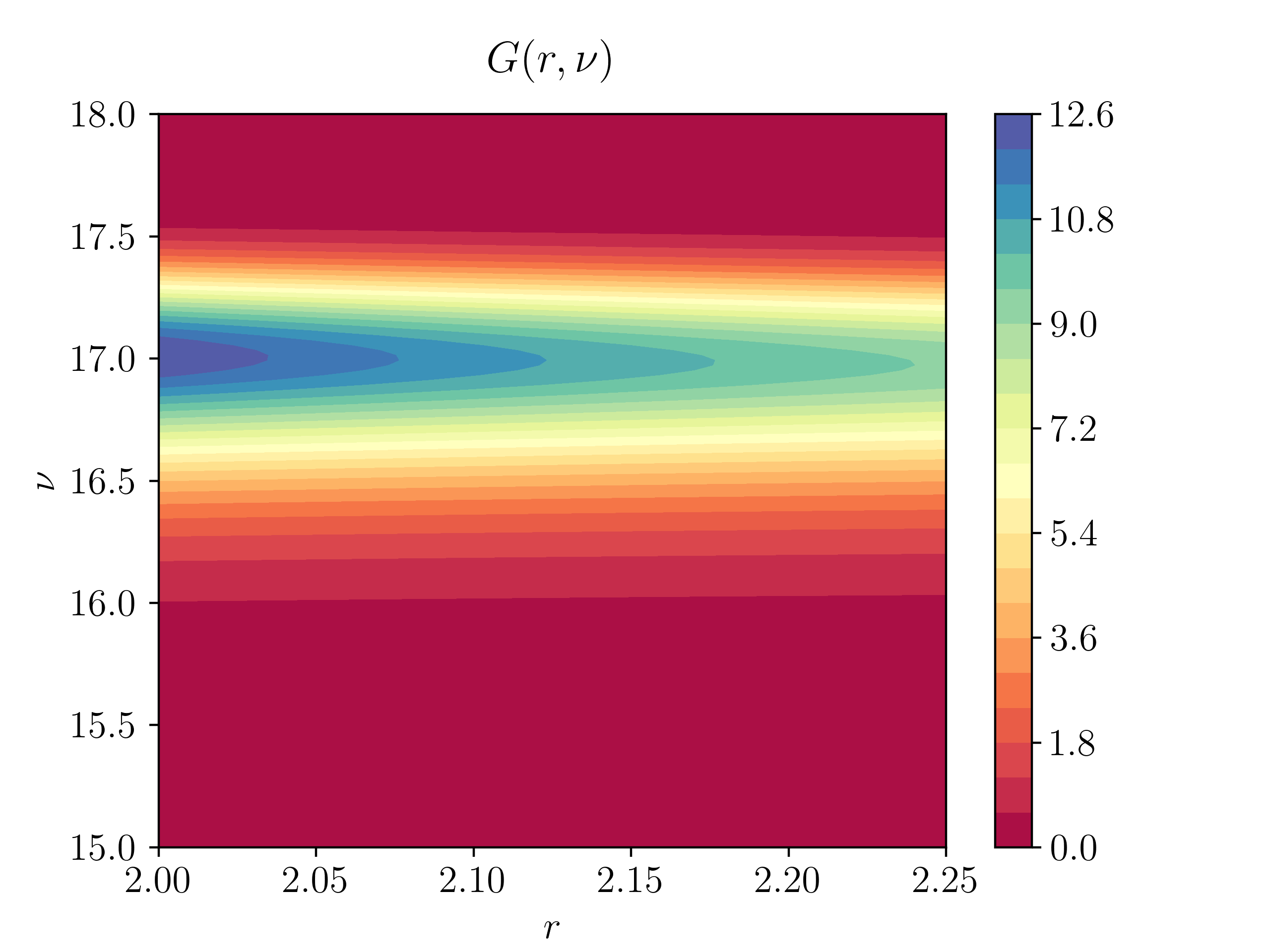}
        \caption{Exact solution of \eqref{eq:RTE_G_approx} for $k_\nu=1$}
    \end{subfigure}
    \begin{subfigure}{.48\textwidth}
        \centering
        \includegraphics[width=1\linewidth]{{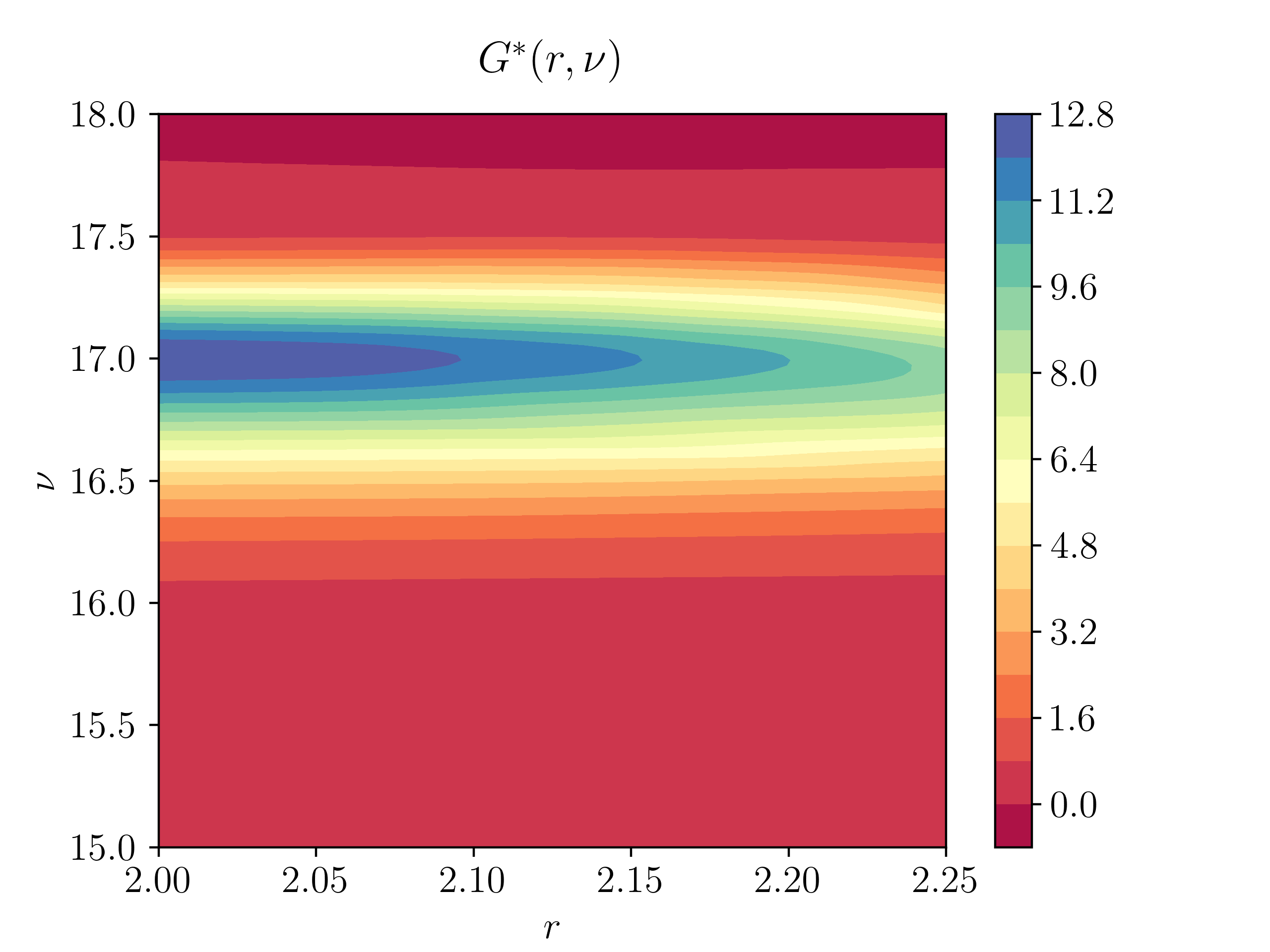}}
        \caption{PINN for $k_\nu=1$}
    \end{subfigure}
  
   \begin{subfigure}{.48\textwidth}
        \centering
        \includegraphics[width=1\linewidth]{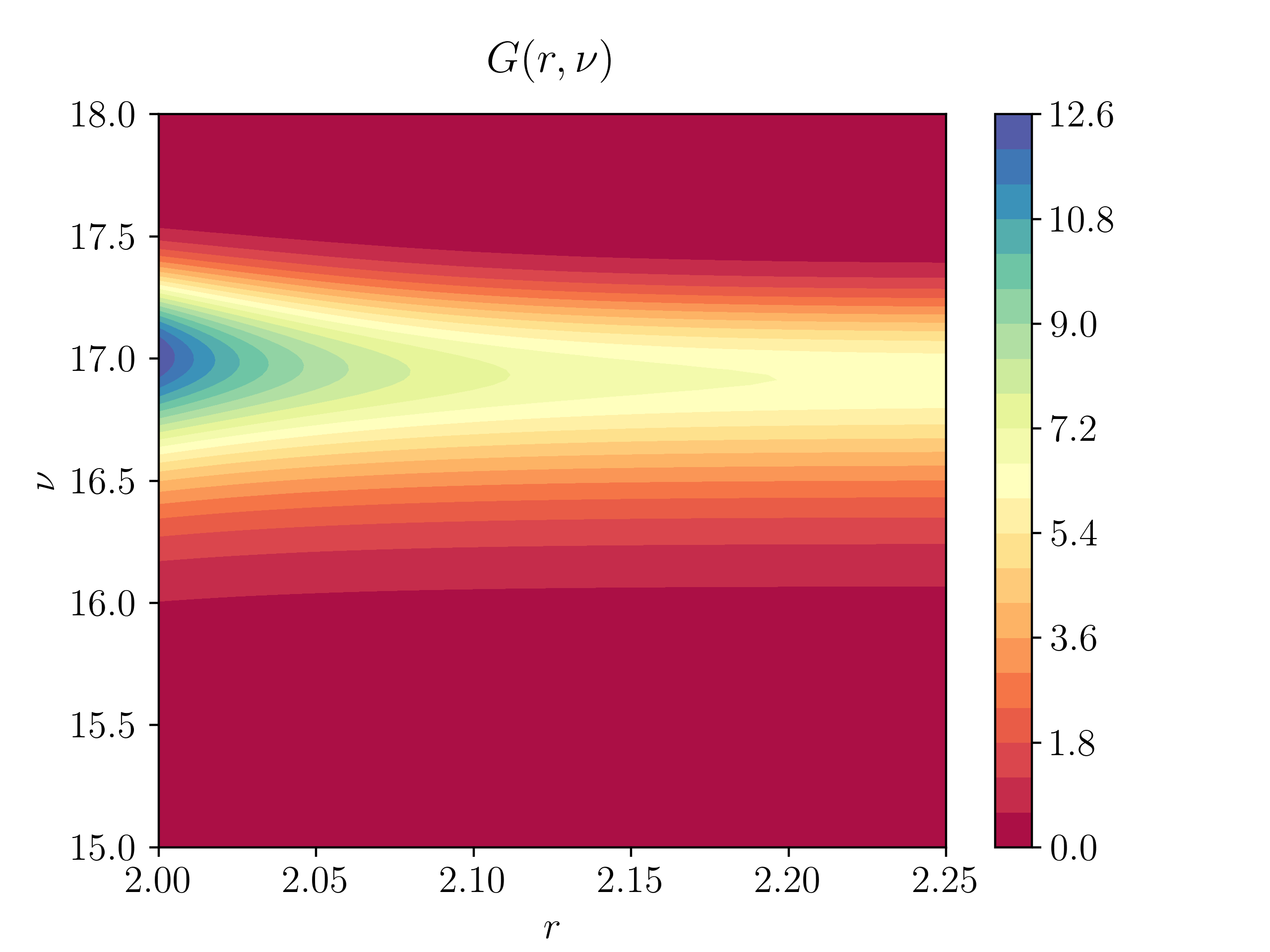}
        \caption{Exact solution of \eqref{eq:RTE_G_approx} for $k_\nu=10$}
    \end{subfigure}
      \begin{subfigure}{.48\textwidth}
        \centering
        \includegraphics[width=1\linewidth]{{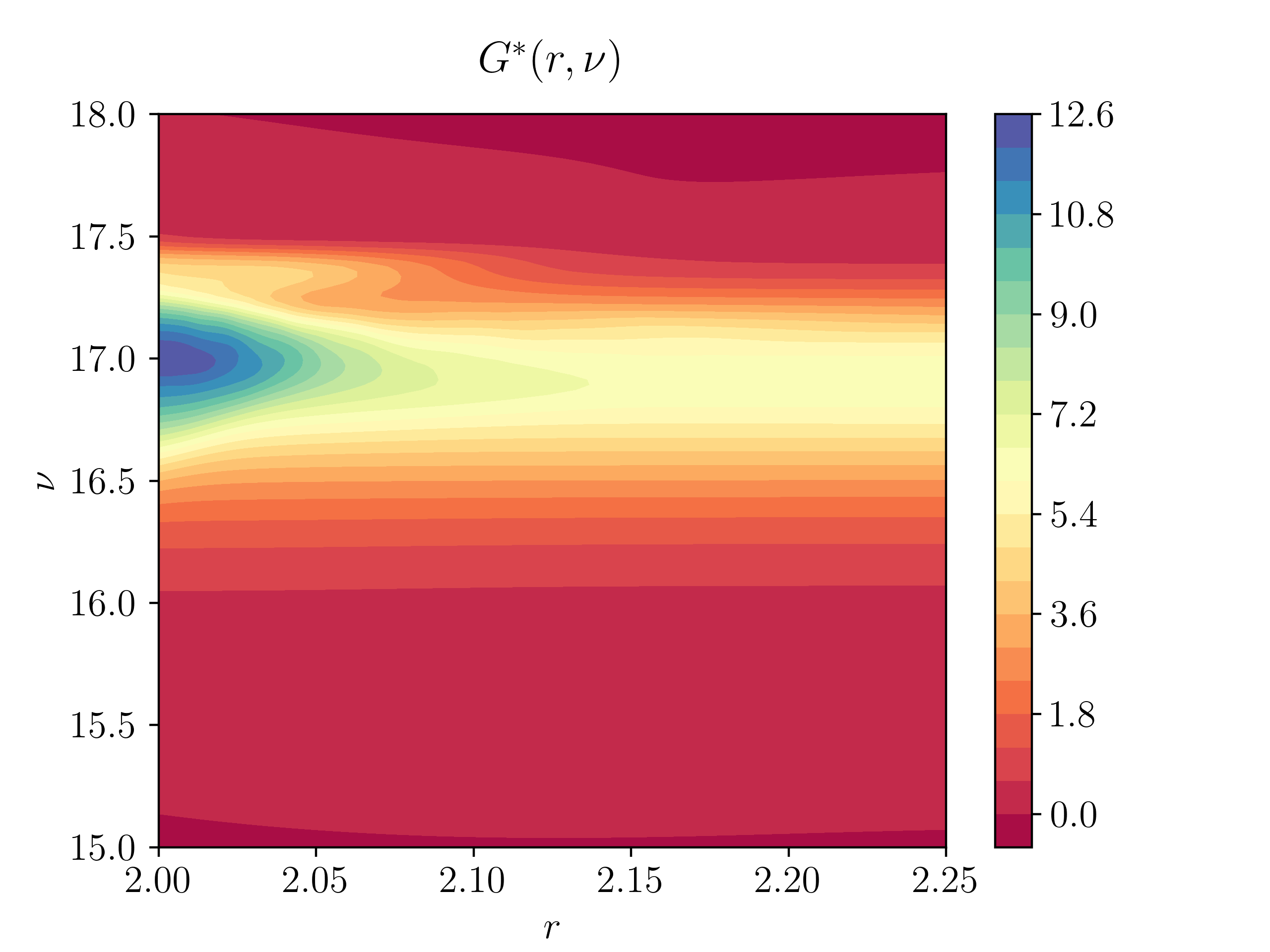}}
        \caption{PINN at $\tau=1$ for $k_\nu=10$}
    \end{subfigure}
    \caption{Comparison of incident radiationn with respect to the exact solution of the diffusion approximation \eqref{eq:RTE_G_approx} and PINN approximation of the full radiative transfer equation \eqref{eq:RTE_time_red} at rescaled time $\tau = 1$ for two different values of the absorption coefficient $k_\nu=1,10$ }
\label{fig:cont_ast}
\end{figure}
 
 The PINNs algorithm \ref{alg:PINN} employs Sobol points in the interior, spatial and temporal boundary training sets and we set $N_{int}=16384$, $N_{sb}=N_{tb}=12228$. Moreover, we solve this problem for two different values of the (constant in frequency) absorption coefficient i.e $k_{\nu}=1$ and $k_{\nu}=10$, resulting in two different Knudsen numbers of $K=2$ and $K=20$, respectively. Given the challenging nature of this problem, we choose slightly different ranges of the hyperparameters, presented in tabel \ref{tab:1} for ensemble training and also use $20$ retrainings, corresponding to different random starting values for the weights and biases in the training procedure. The resulting best performing configurations are reported in table \ref{tab:ast_res}. We observe from this table that PINNs provide a very low training error of $2.8 \times 10^{-3}$, for the $K=2$ case. This training error is comparable to the training errors for the previous two examples. The training error increases by a factor of $4$ for the $K=20$ case, but still remains relatively low. 
 
 As we do not have exact analytical formulas for the full radiative intensity, it is not possible to compute generalization errors. However to ascertain the quality of the solution, we compare with the exact solution \eqref{eq:ex} of the diffusion equation \eqref{eq:RTE_G_approx} for the incident radiation. This comparison is shown as contour plots for the incident radiation in the $(r,\nu)$-plane (with $r$ denoting the radial direction) in figure \ref{fig:cont_ast} as well as one-dimensional cross-sections for different values of the radius $r$ in figure \ref{fig:plot_ast}. As seen from both these figures, there is good agreement between the incident radiation, computed by a Gauss quadrature of the PINN approximation to the radiative intensity in \eqref{eq:RTE_time_red}, and the analytical solution of the diffusion approximation \eqref{eq:RTE_G_approx} for the $K=20$ case. This is not unexpected as the diffusion approximation is accurate for large Knudsen numbers. On the other hand, there is a significant difference between the the incident radiation, computed by a Gauss quadrature of the PINN approximation to the radiative intensity in \eqref{eq:RTE_time_red}, and the analytical solution of the diffusion approximation \eqref{eq:RTE_G_approx} for the $K=2$ case. This follows from the fact that the diffusion approximation will provide a poor approximation of \eqref{eq:RTE_time_red} for low Knudsen numbers. On the other hand, given the relatively low training error as well as the error estimate \eqref{eq:egbd}, coupled with the results of the previous numerical experiments, we argue that the PINN provides a much more accurate approximation to the underlying radiative intensity (and its moments) than the diffusion approximation will do, atleast for low to moderate Knudsen numbers. Hence, PINNs provide a viable and accurate method for competing radiative transfer in media with different optical properties. Moreover, the runtime for even this very complicated problem was reasonably small, ranging from two to three and half hours on a single GPU.
\begin{figure}[ht!]
 \begin{subfigure}{.49\textwidth}
        \centering
        \includegraphics[width=1\linewidth]{{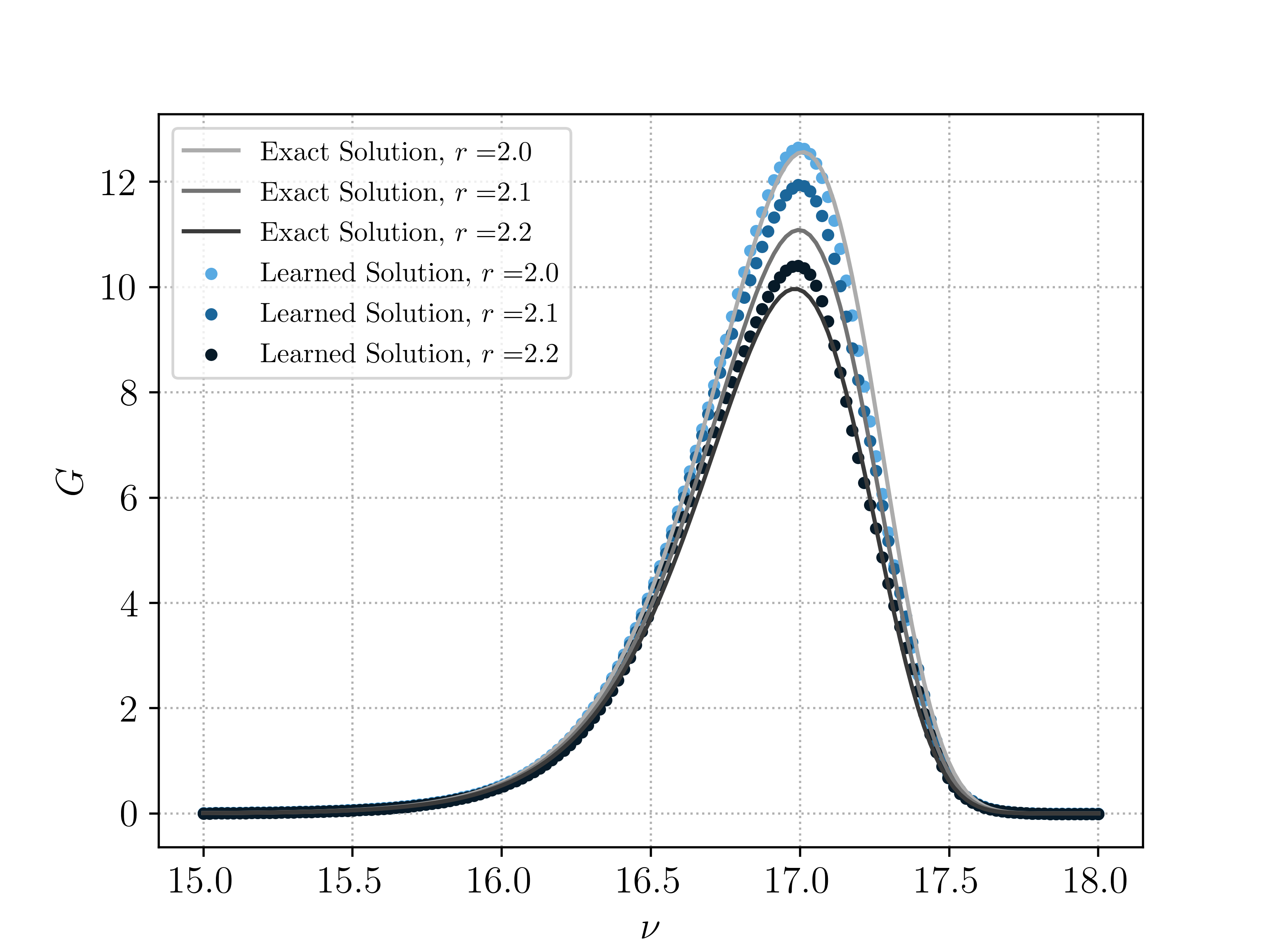}}
        \caption{Solution of \eqref{eq:RTE_time_red} and \eqref{eq:RTE_G} for $k_\nu=1$}
    \end{subfigure}
    \begin{subfigure}{.49\textwidth}
        \centering
        \includegraphics[width=1\linewidth]{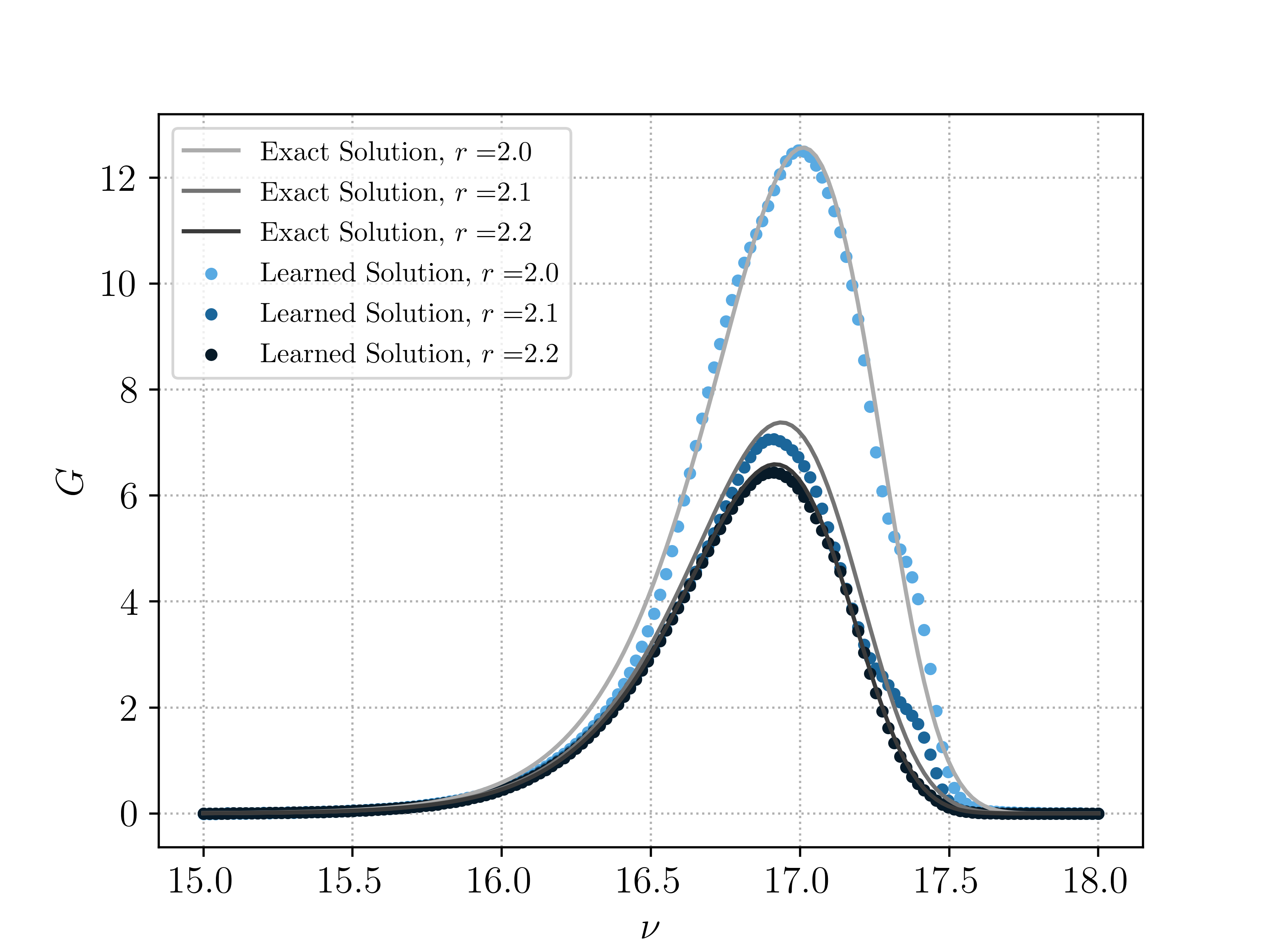}
        \caption{Solution of \eqref{eq:RTE_time_red}  and \eqref{eq:RTE_G} for $k_\nu=10$}
    \end{subfigure}
    \caption{Comparison of exact solutions of the diffusion approximation \eqref{eq:RTE_G_approx} with the PINN approximation of the full radiative transfer equation \eqref{eq:RTE_time_red} for two different values of the absorption coefficient $k_\nu=1,10$ at different radial locations and at rescaled time $\tau=1$ }
\label{fig:plot_ast}
\end{figure}
 \section{PINNs for the Inverse problem for radiative transfer}
 \label{sec:4}
 One of the most notable features of PINNs is their ability to approximate solutions of \emph{inverse problems}, with the same accuracy and computational cost as that of forward problems for PDEs. Moreover, the code for inverse problems ends up being a very minor modification to the code for forward problems, which makes PINNs extremely attractive for various applications \cite{KAR2,KAR4} and references therein. 
 
Here, we focus on the following inverse problem for radiative transfer. We consider the full time-dependent version on the radiative transfer equation \eqref{eq:genRTE}, with initial and boundary conditions \eqref{eq:bc}. The inverse problem is to compute unknown absorption coefficients $k$, scattering coefficients $\sigma$, scattering kernel $\Phi$ or emission term $f$, given measurements of either the full radiative intensity $u$ or its angular moments, such as the incident radiation $G$ or heat flux $F$ \eqref{eq:inc_rad}. For simplicity of exposition, we choose the following concrete inverse problem;\\
\emph{Given measurements of the incident radiation $G(t,x,\nu)$, find the unknown absorption coefficient $k = k(x,\nu)$ and the resulting radiative intensity $u(t,x,\omega,\nu)$ which solves the radiative transfer equation \eqref{eq:genRTE} with initial and boundary conditions \eqref{eq:bc}}. 
\par Other combinations of the measured and unknown quantities can be similarly considered. Clearly this inverse problem is ill-posed as multiple absorption coefficients might lead to the same incident radiation. However, we will aim to obtain one of the possible absorption coefficients, consistent with the measured incident radiation. 

To this end, we slightly modify the PINNs algorithm as described below,
\subsection{PINNs for the inverse problem}
Following \cite{KAR2,MM2}, we seek to find  the deep neural networks $k_{\theta_k}:D \times \Lambda \mapsto \R_+$ and $u_{\theta_{u}}: D_{T} \times S \times \Lambda \mapsto \R$, with the concatenated parameter vector $\theta = \{\theta_k,\theta_u\} \in \Theta$, approximating the absorption coefficient and radiative intensity, respectively. 

In addition to the interior training set $\train_{int}$, spatial boundary training set $\train_{sb}$ and temporal boundary training set $\train_{tb}$, defined in section \ref{sec:2}, we also required the so-called \emph{data training set} $\train_{d} = \{y^d_j\}$, for $1 \leq j \leq N_d$, and $y^d_j \in D_T \times \Lambda$. 

The residuals for initial and boundary conditions are given by $\res_{tb},\res_{sb}$ \eqref{eq:resb}. We slightly modify the PDE residual \eqref{eq:res1} to,
\begin{equation}
    \label{eq:res1i}
    \overline{\res}_{int,\theta}:= \frac{1}{c}\partial_t u_{\theta_u} +   \omega\cdot\nabla_x u_{\theta_u}  + k_{\theta_k} u_{\theta_u} + \sigma\Bigg(u_{\theta_u} - \frac{1}{s_d}
    \sum\limits_{i=1}^{N_S} w_i^S \Phi(\omega, \omega^S_i, \nu, \nu^S_i) u_{\theta_u}(t,x,\omega^{S}_i, \nu^{S}_i)\bigg) - f.
\end{equation}
We also need the \emph{data residual},
\begin{equation}
 \label{eq:resd}
 \res_{d,\theta}:= G\left(u_{\theta_u}\right) - \bar{G}(t,x,\nu), \quad \forall (t,x,\nu) \in D_T \times \Lambda,
\end{equation}
with $G$ being the incident radiation calculated from \eqref{eq:inc_rad} with a Gauss quadrature approximation of the angular integral and $\bar{G}$ being the measured incident radiation.

The resulting loss function is,
\begin{equation}
    \label{eq:lf3}
    J(\theta):= \sum\limits_{j=1}^{N_{d}} w_j^{d} |\res_{d,\theta}(y_j^{d})|^{2}+
    \sum\limits_{j=1}^{N_{sb}} w_j^{sb} |\res_{sb,\theta}(z_j^{sb})|^{2}+ 
    \sum\limits_{j=1}^{N_{tb}} w_j^{tb} |\res_{tb,\theta}(z_j^{tb})|^{2} +
    \lambda\sum\limits_{j=1}^{N_{int}} w^{int}_j |\overline{\res}_{int,\theta}(z_j^{int})|^{2}
\end{equation}
with the residuals $\res_{d},\res_{sb}, \res_{tb}$ and $\overline{\res}_{int}$ defined in \eqref{eq:resd}, \eqref{eq:resb} ,\eqref{eq:res1},
and $w^{d}, y^d, w^{sb},z^{sb}$, $w^{sb},z^{sb}$, $w^{int},z^{int}$ being  the quadrature weights and training points, corresponding to the data, boundary and interior training sets. 

The PINNs algorithm for the inverse problem is summarized as, 
\begin{algorithm} 
\label{alg:PINNi} {Finding physics informed neural network (PINNs) to approximate the absorption coefficient $k$ and radiative intensity $u$ solving the radiative transfer equation \eqref{eq:genRTE}, and consistent with measured data $\bar{G}$ for the incident radiation} 
\begin{itemize}
\item [{\bf Inputs}:] Underlying domain $\domain$, coefficients and data for the radiative transfer equation \eqref{eq:genRTE}, measured incident radiation $G$, quadrature points and weights for underlying quadrature rules, non-convex gradient based optimization algorithms.
\item [{\bf Goal}:] Find PINN $(k^{\ast},u^{\ast})= \left(k_{\theta_k^{\ast}}, u_{\theta_u^{\ast}}\right)$ for approximating the inverse problem for radiative transfer
\item [{\bf Step $1$}:] Choose the training sets as described in section \ref{sec:22} and in this subsection.
\item [{\bf Step $2$}:] For an initial value of the weight vector $\overline{\theta} \in \Theta$, evaluate the neural networks $u_{\overline{\theta}_u}$, $k_{\overline{\theta}_k}$ \eqref{eq:ann1}, the PDE residual \eqref{eq:res1i}, the data residual \eqref{eq:resd}, the boundary residuals \eqref{eq:resb}, the loss function \eqref{eq:lf3} and its gradients to initialize the underlying optimization
algorithm.
\item [{\bf Step $3$}:] Run the optimization algorithm till an approximate local minimum $\theta^{\ast}$ of \eqref{eq:lf3} is reached. The map $u^{\ast} = u_{\theta^{\ast}_u}$ is the desired PINN for approximating the solution $u$ of the radiative transfer equation and the map $k^{\ast} = k_{\theta_k^{\ast}}$ is the corresponding absorption coefficient. 
\end{itemize}
\end{algorithm}
\begin{table}[htbp] 
    \centering
    \renewcommand{\arraystretch}{1.1} 
    
    \footnotesize{
        \begin{tabular}{c  c c c c c  c c c c c  c c} 
            \toprule
             $N_{int}$  & $N_{sb}$  & $N_{d}$  &\bfseries $K-1$ & \bfseries $\tilde{d}$  &\bfseries $\lambda$&  $\er_T$ & $||u - u^{\ast}||_{L^2}$   & $||k - k^{\ast}||_{L^2}$ & $||G - G^{\ast}||_{L^2}$   & Training Time \\ 
            \midrule
            \midrule
            16384 & 120   & 4096 &8&20& 1.0   & 0.00094& 0.65 $\%$ &2.8 $\%$ &0.073$\%$ & 1~hr~44~min \\
            \bottomrule
        \end{tabular}
        \caption{Results for the inverse problem for radiative transfer. }
        \label{tab:r1}
    }
\end{table}
\subsection{A Numerical Experiment.}
\label{sec:42}
The monochromatic stationary version of the radiative transfer equation \eqref{eq:genRTE} in three space dimensions, is used in this numerical experiment. The spatial domain is the unit cube $D = [0,1]^3$, with scattering coefficient $\sigma=0.5$, scattering kernel $\Phi \equiv 1$. The source term $f$ and boundary term $u_b$ are generated using the following synthetic absorption coefficient and exact solution,
\begin{equation}
\label{eq:inv}
  k(x) = \prod_{i=1}^{3}x_i^2,\quad   u(x,\omega) = \frac{3}{16\pi}(1 + (\omega \cdot \omega')^2)\prod_{i=1}^{3}x_i(x_i-1), \quad n'=\Big(\frac{1}{\sqrt{3}}, \frac{1}{\sqrt{3}}, \frac{1}{\sqrt{3}}\Big)^T
\end{equation}
The measured incident radiation $\bar{G}$ in \eqref{eq:resd} is calculated from the radiative intensity $u$ above by using the formula \eqref{eq:inc_rad}. 

For this numerical experiment, we also impose boundary conditions on the neural network approximating the absorption coefficient $k_{\theta_k}$ to approximately match the values of $k$, defined in \eqref{eq:inv} on the boundary of $D$, leading to an additional term in \eqref{eq:lf3}. Finally, in order to ensure uniqueness of the absorption coefficient, we include in the loss function, the so-called Tikhonov regularization:
\begin{equation}
    J_T(\theta) = \lambda_k ||\nabla k_\theta||^2_2, \quad \lambda_k=0.001.
\end{equation}
We use Sobol points for the interior training points and uniformly distributed random points are used as data training points, with $N_{int}=16384$, $N_{d} = 4096$. The resulting best performing hyperparameter configuration after ensemble training is presented in Table \ref{tab:r1}. 

In figure \ref{fig:r1} we plot the incident radiation $G$ and the absorption coefficient $k$, along the diagonal of the unit cube, computed with the PINNs algorithm \ref{alg:PINNi}. As observed from this figure, the incident radiation is almost identical to the measured data $\bar{G}$. This is further verified from table \ref{tab:r1}, from which we observe a very low $L^2$-error for the incident radiation. On the other hand, the absorption coefficient agrees reasonably well with the ground truth in \eqref{eq:inv}, with an error of less than $3\%$. Also the radiative intensity is approximated to very high accuracy, with a generalization error below $1\%$. This is even more impressive if one consider that the problem is solved with a computational time of approximately 100 minutes. 
 
\begin{figure}[ht!]

    \begin{subfigure}{.49\textwidth}
        \centering
        \includegraphics[width=1\linewidth]{{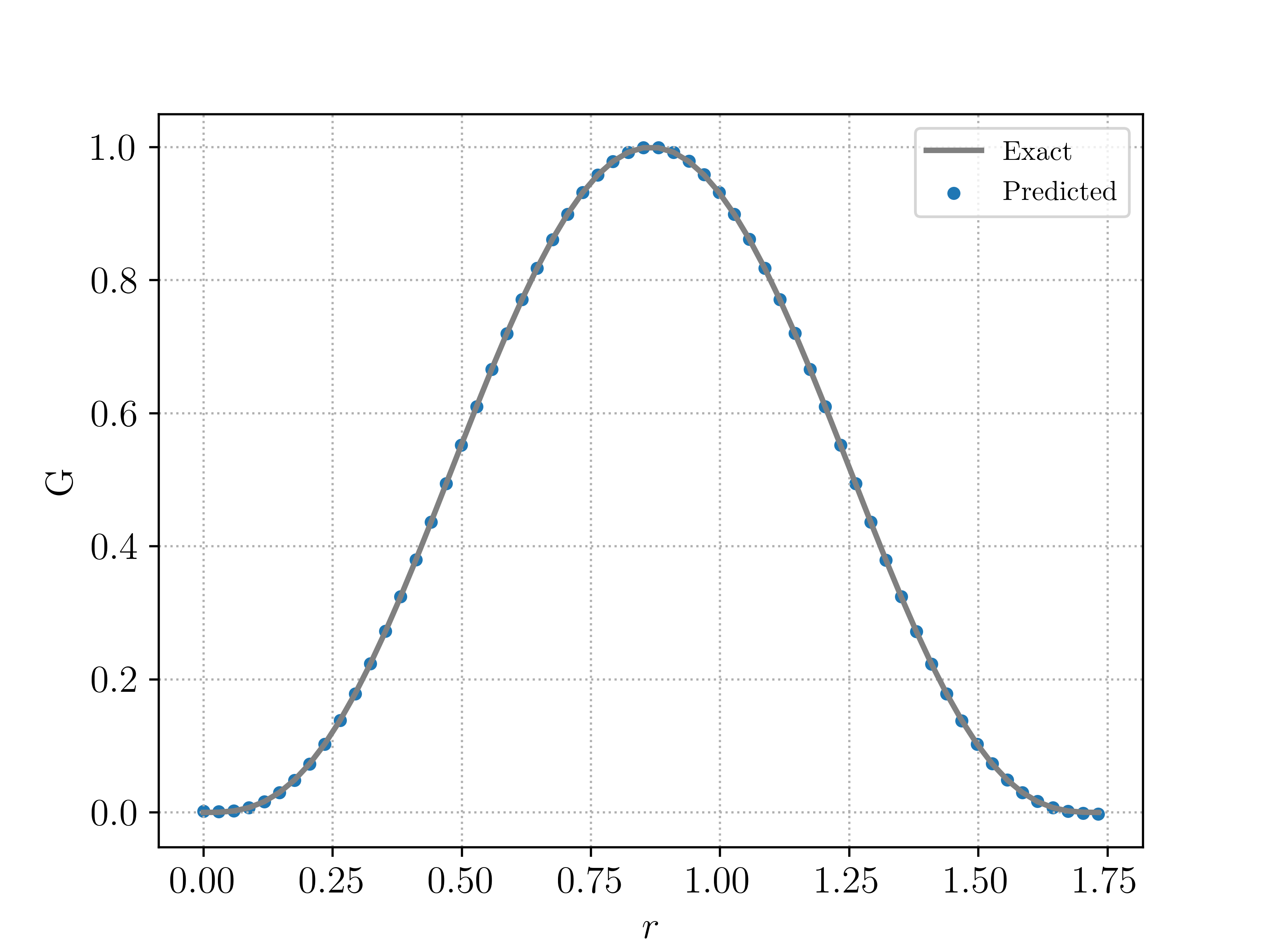}}
        \caption{Incident radiation $G$}
    \end{subfigure}
    \begin{subfigure}{.49\textwidth}
        \centering
        \includegraphics[width=1\linewidth]{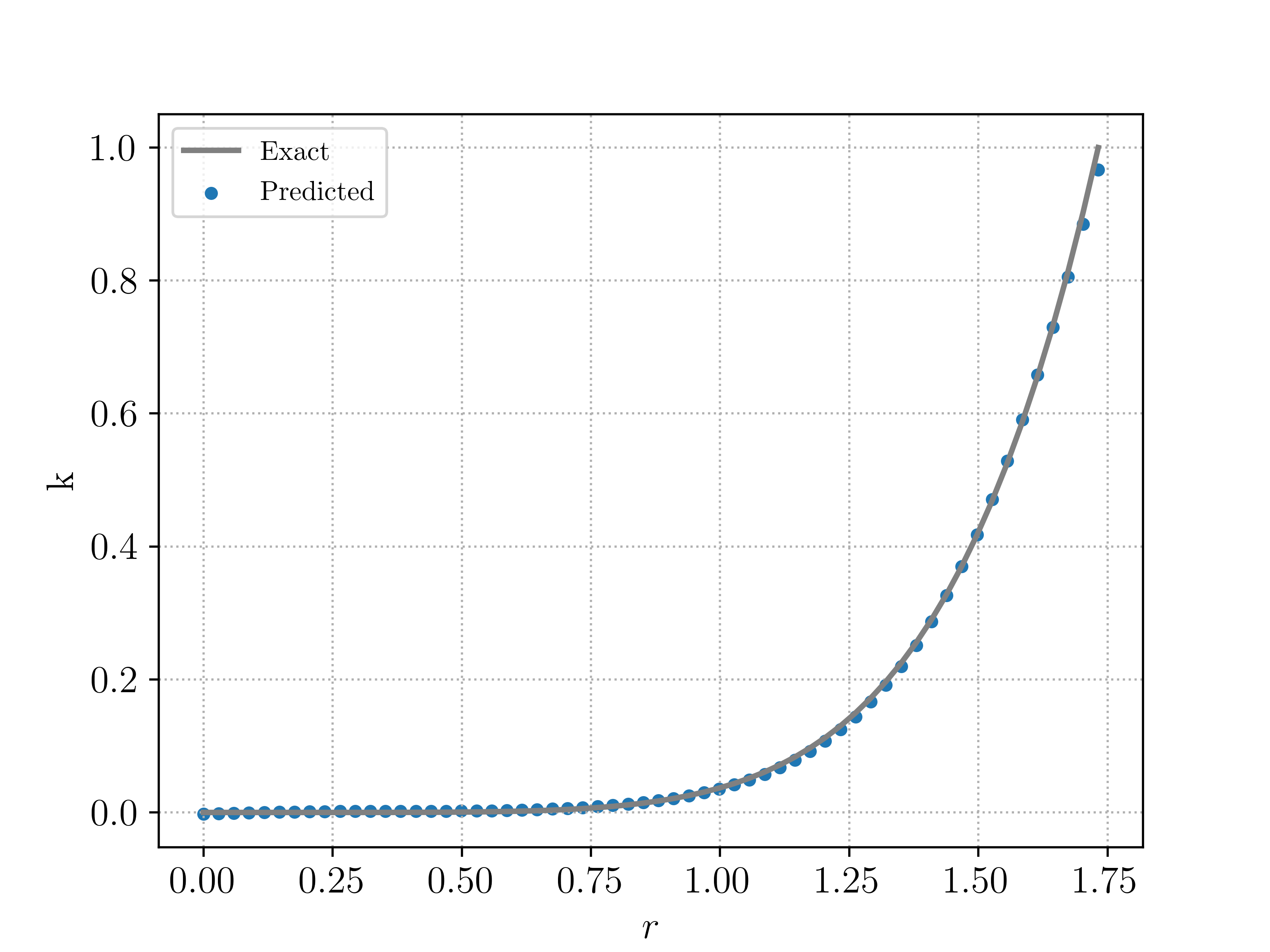}
        \caption{Absorption Coefficient $k$}
    \end{subfigure}
    \caption{Results for the PINNs algorithm \ref{alg:PINNi} for the inverse problem for radiative transfer. The PINNs approximation to the incident radiation and absorption coefficient are plotted along the diagonal of the unit cube and compared with the measured data $\bar{G}$ and ground truth absorption coefficient $k$ given by \eqref{eq:inv}.}
\label{fig:r1}
\end{figure} 
\section{Discussion}
\label{sec:5}
Accurate numerical approximation of the radiative transfer equations \eqref{eq:genRTE} is considered very challenging as the underlying problem is high-dimensional, with $7$-dimensions in the most general case. Moreover, the presence of different physical effects such as emission, absorption and scattering as well as varying optical parameters in the surrounding medium further complicates design of efficient numerical algorithms. As discussed in the introduction, existing numerical methods can suffer from the so-called \emph{curse of dimensionality} and require very large amount of computational resources to achieve desired accuracy. 

Hence, there is a need for designing algorithms for simulating radiative transfer, that are easy to implement and fast (in terms of computational run time) while still being accurate. We proposed such an algorithm in this article. Our algorithm \ref{alg:PINN} is based on \emph{physics informed neural networks} (PINNs) i.e. deep neural networks for approximating the radiative intensity in \eqref{eq:genRTE}. The deep neural network is trained by using a gradient descent method to minimize a loss function \eqref{eq:lf2}, that consists of the \emph{PDE residual}, resulting from the neural network being plugged into the radiative transfer equation \eqref{eq:genRTE}. Mismatches with respect to the initial and boundary conditions also contribute to the loss function. The residuals are collocated at \emph{training points}, which correspond to quadrature points with respect to an underlying quadrature rule. We chose Sobol low-discrepancy sequences as training points in order to alleviate the curse of dimensionality. 

The resulting algorithm is extremely simple to code within standard machine learning frameworks such as TensorFlow and Pytorch. We presented a suite of numerical experiments ranging from the simplest monochromatic stationary radiative transfer in one space dimension \eqref{eq:1drad} to the most general time-dependent polychromatic radiative transfer in three space dimensions. PINNs performed very well on all the numerical experiments, leading to low errors with  small run (training) times. In particular, the results were qualitatively and quantitatively comparable to published results, but possibly at a fraction of the cost. The experimental results were supplemented with rigorous error estimates that bounded the generalization error \eqref{eq:gerr} in terms of computable training errors \eqref{eq:etrain} and number of quadrature points, independent of the underlying dimension, see bounds \eqref{eq:egbd} for details. The predictions of the error estimates were validated by the experiments. 

Hence, we claim that the PINNs algorithm \ref{alg:PINN} is a general purpose, simple to implement, fast and accurate simulator for radiative transfer. Moreover, we also presented a (very) slightly modified version of the PINN algorithm to efficiently simulate a class of \emph{inverse problems} for radiative transfer. In this problem, the objective is to compute the unknown absorption coefficient from measurements of the incident radiation. To this end, we proposed the deep neural network based algorithm \ref{alg:PINNi}, which added a data fidelity term \eqref{eq:resd} to the underlying loss function \eqref{eq:lf3}. This algorithm was also found to be both fast and accurate in numerical experiments. Thus, we provide \emph{novel machine learning algorithms} which are fast, easy to implement and accurate for efficiently simulating different aspects of radiative transfer.

This article should be considered as a first step in adapting machine learning algorithms for simulating radiative transfer. The proposed algorithm lends itself readily to be extended in the following directions,
\begin{itemize}
    \item The quantities of interest in many applications of radiative transfer, particularly in astrophysics, are angular moments such as the incident radiation $G$ and heat flux $F$, defined in \eqref{eq:inc_rad} as these quantities define the contribution of radiation to the total energy. Thus, in radiation hydrodynamics, one often resorts to using moment models. However, these moment models require closure relations, which might either be expensive to compute and/or inaccurate (see section \ref{sec:35} for a simple diffusion based closure). Given the computational speed of PINNs, one can readily employ algorithm \ref{alg:PINN} as the radiation module in a hydrodynamics code, with angular moments and the resulting energy being computed from the neural network approximation of the radiative intensity. Another option would be to leverage the ability of PINNs to directly perform hydrodynamic simulations (see \cite{KAR4,MM1}) and have a full PINN simulation of radiation hydrodynamics. Both approaches should be developed and tested. 
    \item We covered the inverse problem for radiative transfer briefly in section \ref{sec:4}. Algorithm \ref{alg:PINNi} can be readily extended to many other inverse problems involving radiative transfer, for instance finding emission and scattering coefficients from measurements of incident radiation and heat fluxes at a few points. A careful exposition and analysis of the resulting algorithms and their application in practical contexts will be the basis of future papers. 
\end{itemize}
\section*{Acknowledgements}
The research of SM and RM was partially supported by the European Research Council (ERC) under ERCCoG 770880 COMANFLO. SM thanks Dr. Roger K\"appeli of SAM, ETH Z\"urich for his inputs on radiation and astrophysics.

\appendix
\section{Estimates on the generalization error for the radiative transfer equation \eqref{eq:genRTE}}
\label{sec:a1}
In order to derive an error estimate for the PINNs algorithm, we need to make some assumptions on the scattering kernel $\Phi$ in \eqref{eq:genRTE}. We follow standard practice and assume that it is symmetric $\Phi(\omega,\omega^{\prime},\nu,\nu^{\prime}) = \Phi(\omega^{\prime},\omega,\nu^{\prime},\nu)$. Moreover, the following function,
\begin{equation}
    \label{eq:psi}
    \Psi(\omega,\nu) = \int\limits_{S \times \Lambda} \Phi(\omega,\omega^{\prime},\nu,\nu^{\prime}) d\omega^{\prime} d\nu^{\prime},
\end{equation}
is essentially bounded i.e. $\Psi \in L^{\infty}(S \times \Lambda)$.
We have the following estimate on the generalization error \eqref{eq:gerr},
\begin{lemma}
\label{lem:1}
Let $u \in L^2(\dom)$ be the unique weak solution of the radiative transfer equation \eqref{eq:genRTE}, with absorption coefficient $0 \leq k \in L^{\infty}(D\times \Lambda)$, scattering coefficient $0 \leq \sigma \in L^{\infty}(D\times \Lambda)$ and a symmetric scattering kernel $\Phi \in C^\ell(S \times \Lambda \times S \times \Lambda)$, for some $\ell \geq 1$, such that the function $\Psi$ (defined in \eqref{eq:psi}) is in $L^{\infty}(S \times \Lambda)$. Let $u^{\ast} = u_{\theta^{\ast}} \in C^{\ell}(\dom)$ be the output of the PINNs algorithm \ref{alg:PINN} for approximating the radiative transfer equation \eqref{eq:genRTE}, such that 
\begin{equation}
    \max\{ V_{HK}(u^{\ast}), V_{HK}\left(\res_{int,\theta^{\ast}}\right)\} < +\infty, 
\end{equation}
with $V_{HK}$ being the so-called Hardy-Krause variation (see \cite{CAF1,MR1} for the precise definition). We also assume that the initial data $u_0$ and boundary data $u_b$ are of bounded Hardy-Krause variation. Then, under the assumption that Sobol points are used as the training points $\train_{int},\train_{sb},\train_{tb}$ in algorithm \ref{alg:PINN} and Guass-quadrature rule of order $s = s(\ell)$ is used in approximating the scattering kernel in the residual \eqref{eq:res1}, we have the following estimate on the generalization error,
\begin{equation}
    \label{eq:egbd}
    \begin{aligned}
    (\er_G)^2 &\leq C\left((\er_T^{tb})^2 + c(\er_T^{sb})^2 + c(\er_T^{int})^2\right) \\
    &+ CC^{\ast}\left(\frac{(\log(N_{tb}))^{2d}}{N_{tb}} + c \frac{(\log(N_{sb}))^{2d}}{N_{sb}}+ c \frac{(\log(N_{int}))^{2d+1}}{N_{int}} + c N^{-2s}_{S}\right) 
    \end{aligned}
\end{equation}
with constants defined as,
\begin{equation}
    \label{eq:c}
    \begin{aligned}
    C &= T+c\hat{C}T^2e^{c\hat{C}T}, \quad \hat{C} = 2 +  \frac{2(\|\sigma\|_{L^{\infty}}+\|\Psi\|_{L^{\infty}})}{s_d} \\
    C^{\ast}&=\max\left\{V_{HK}\left((\res_{tb}^{\ast})^2\right), V_{HK}\left((\res_{sb}^{\ast})^2\right), V_{HK}\left((\res_{int}^{\ast})^2\right),\overline{C}\right\} \\
    \overline{C} &= \overline{C}\left(|\dom|,\|\Phi\|_{C^{\ell}},\|u^{\ast}\|_{C^{\ell}}\right)
     \end{aligned}
\end{equation}
\end{lemma}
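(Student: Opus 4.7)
The plan is a standard three-step argument: (i) derive a transport-type PDE for the error $\hat{u} := u - u^{\ast}$, (ii) close an $L^{2}$-energy estimate and apply Gronwall's inequality to control $\|\hat{u}\|_{L^{2}(\dom)}$ by the continuous $L^{2}$-norms of the PDE, initial and boundary residuals at $\theta^{\ast}$, and (iii) translate these continuous norms into the empirical training errors \eqref{eq:etrain} by means of the Koksma-Hlawka inequality for Sobol points. An auxiliary ingredient is a Gauss-Legendre quadrature estimate for the scattering integral that appears inside \eqref{eq:res1}.

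Reading \eqref{eq:res1} as a PDE for $u^{\ast}$ and subtracting \eqref{eq:genRTE}, the error satisfies
\begin{align*}
\tfrac{1}{c}\partial_{t}\hat{u} + \omega\cdot\nabla_{x}\hat{u} + k\hat{u} + \sigma\Bigl(\hat{u} - \tfrac{1}{s_{d}}\int_{S\times\Lambda}\Phi(\cdot)\hat{u}\,d\omega^{\prime}d\nu^{\prime}\Bigr) = -\res_{int,\theta^{\ast}} + \mathcal{E}_{S},
\end{align*}
where $\mathcal{E}_{S}$ collects the Gauss-Legendre quadrature error produced when the discrete scattering sum in \eqref{eq:res1} is compared with its continuous counterpart acting on $u^{\ast}$; by the $C^{\ell}$-regularity of $\Phi$ and $u^{\ast}$ one has $\|\mathcal{E}_{S}(t,\cdot)\|_{L^{2}}^{2} \leq C\,N_{S}^{-2s}$ with $C$ depending on $\|\sigma\|_{L^{\infty}}$, $\|\Phi\|_{C^{\ell}}$ and $\|u^{\ast}\|_{C^{\ell}}$. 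The initial and inflow boundary traces of $\hat{u}$ are exactly $-\res_{tb,\theta^{\ast}}$ and $-\res_{sb,\theta^{\ast}}$, respectively. Multiplying by $\hat{u}$ and integrating over $D\times S\times\Lambda$, the transport term produces a boundary integral $\tfrac{1}{2}\int_{\partial D\times S\times\Lambda}(\omega\cdot n)\hat{u}^{2}$ whose outflow part is nonnegative and dropped, while the inflow part is controlled by $\|\res_{sb,\theta^{\ast}}\|_{L^{2}(\Gamma_{-})}^{2}$. The absorption contribution $\int k\hat{u}^{2}\geq 0$ is discarded, and the nonlocal scattering term is estimated by Cauchy-Schwarz using the symmetry of $\Phi$ and the assumption $\Psi \in L^{\infty}$, giving a bound of the form $\tfrac{\|\sigma\|_{L^{\infty}}+\|\Psi\|_{L^{\infty}}}{s_{d}}\|\hat{u}\|_{L^{2}}^{2}$, which explains the definition of $\hat{C}$ in \eqref{eq:c}. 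After applying Young's inequality on the right-hand side, one obtains a differential inequality
\begin{align*}
\tfrac{d}{dt}\|\hat{u}(t,\cdot)\|_{L^{2}}^{2} \leq c\hat{C}\,\|\hat{u}(t,\cdot)\|_{L^{2}}^{2} + c\bigl(\|\res_{int,\theta^{\ast}}(t,\cdot)\|_{L^{2}}^{2} + \|\res_{sb,\theta^{\ast}}(t,\cdot)\|_{L^{2}}^{2} + \|\mathcal{E}_{S}(t,\cdot)\|_{L^{2}}^{2}\bigr).
\end{align*}
Gronwall's inequality, with initial data $\|\res_{tb,\theta^{\ast}}\|_{L^{2}}^{2}$, followed by a further integration in $t \in [0,T]$, produces a \emph{continuous} version of \eqref{eq:egbd} in which the prefactor $C = T + c\hat{C}T^{2}e^{c\hat{C}T}$ of \eqref{eq:c} emerges directly.

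To replace the continuous $L^{2}$-norms of the residuals by the training errors \eqref{eq:etrain}, I would invoke the Koksma-Hlawka inequality for Sobol sequences: for $g$ of bounded Hardy-Krause variation on $[0,1]^{D}$,
\begin{align*}
\Bigl|\int_{[0,1]^{D}} g(z)\,dz - \sum_{j=1}^{N} w_{j}\,g(z_{j})\Bigr| \leq V_{HK}(g)\,\frac{(\log N)^{D}}{N}.
\end{align*}
Applied to $(\res_{int,\theta^{\ast}})^{2}$, $(\res_{sb,\theta^{\ast}})^{2}$ and $(\res_{tb,\theta^{\ast}})^{2}$ --- each of bounded Hardy-Krause variation by the assumptions on $u^{\ast}$, $u_{0}$, $u_{b}$ and the coefficients --- this replaces each continuous norm by the corresponding empirical training error plus a discretization overhead of the form $C^{\ast}(\log N)^{D}/N$, with $D = 2d+1$ for the interior contribution and $D = 2d$ for the two boundary contributions, matching the exponents in \eqref{eq:egbd}. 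Assembling these estimates with the continuous bound of the previous step and absorbing the various Hardy-Krause variations (together with the Gauss-Legendre constant from $\mathcal{E}_{S}$) into the single constant $C^{\ast}$ of \eqref{eq:c} gives exactly \eqref{eq:egbd}.

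I expect the delicate step to be the energy estimate for the nonlocal scattering operator: one has to use both the symmetry of $\Phi$ and the boundedness of $\Psi$ in order to show that $\sigma\hat{u} - (\sigma/s_{d})\int\Phi\hat{u}$ acts as a bounded perturbation of $\hat{u}$ in $L^{2}$ with the specific constant $\hat{C}$, rather than a term that cannot be closed by Gronwall at all. A secondary technical point is verifying that the squared PDE residual has finite Hardy-Krause variation, which in principle requires controlling mixed partial derivatives of $u^{\ast}$ up to order $2d+1$; these are absorbed into the abstract constant $C^{\ast}$ under the blanket hypotheses of the lemma, and therefore do not need to be made explicit.
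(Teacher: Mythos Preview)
Your proposal is correct and follows essentially the same route as the paper: derive the error PDE for $\hat{u}$, multiply by $\hat{u}$ and integrate to obtain an energy inequality (dropping the nonnegative absorption and outflow terms, bounding the nonlocal scattering operator via Cauchy--Schwarz with the constant $\hat{C}$), apply Gr\"onwall and integrate in time to obtain the continuous version of \eqref{eq:egbd}, and then convert the continuous residual norms to training errors via the Koksma--Hlawka inequality for Sobol points together with the Gauss--Legendre error bound for $\mathcal{E}_{S}$. The only cosmetic difference is that the paper integrates in time first and then applies the integral form of Gr\"onwall, whereas you state a differential inequality; the outcome and the constants are the same.
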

\begin{proof}
We drop the $\theta^{\ast}$ dependence in the residuals \eqref{eq:res1}, \eqref{eq:resb}, for notational convenience and denote the residuals as $\res^{\ast}_{int},\res^{\ast}_{sb},\res^{\ast}_{tb}$. Define, 
\begin{equation}
    \label{eq:E}
    E(u^{\ast},\Phi):= \sum\limits_{i=1}^{N_S} w_i^S \Phi(\omega, \omega^S_i, \nu, \nu^S_i) u^{\ast}(t,x,\omega^{S}_i, \nu^{S}_i) - \int\limits_{\Lambda}\int\limits_{S} \Phi(\omega, \omega^{\prime}, \nu, \nu^{\prime})u^{\ast}(t,x,\omega^{\prime}, \nu^{\prime}) d\omega^{\prime}d\nu^{\prime}.
\end{equation}
It is straightforward to derive from the radiative transfer equation \eqref{eq:genRTE} and the definition of residuals \eqref{eq:res1}, \eqref{eq:resb}, that the error $\hat{u} = u^{\ast} - u$, satisfies the following integro-differential equation,
\begin{equation}
    \label{eq:uhat}
    \begin{aligned}
    \frac{1}{c}\hat{u}_t + \omega\cdot\nabla_x \hat{u}&=   -(k+ \sigma) \hat{u} +   \frac{\sigma}{s_d}\int\limits_{\Lambda}\int\limits_{S} \Phi(\omega, \omega^{\prime}, \nu, \nu^{\prime})\hat{u}(t,x,\omega^{\prime}, \nu^{\prime}) d\omega^{\prime}d\nu^{\prime} \\
    &+ \res_{int}^{\ast} + E(u^{\ast},\Phi). \\
    \hat{u}(0,x,\omega,\nu) &= \res^{\ast}_{tb}, \quad (x,\omega,\nu) \in D \times S \times \Lambda, \\
    \hat{u}(t,x,\omega,\nu) &= \res^{\ast}_{sb}, \quad (t,x,\omega,\nu) \in \Gamma_- \times \Lambda.
\end{aligned}
\end{equation}
Multiplying $\hat{u}$ on both sides of the first equation in \eqref{eq:uhat}, we obtain, 
\begin{equation}
    \label{eq:pf1}
    \begin{aligned}
    \frac{1}{2c}\frac{d(\hat{u}^2)}{dt} + \omega\cdot\nabla_x (\frac{\hat{u}^2}{2})&=   -(k+ \sigma) \hat{u}^2 +   \frac{\sigma}{s_d}\int\limits_{\Lambda}\int\limits_{S} \Phi(\omega, \omega^{\prime}, \nu, \nu^{\prime})\hat{u}(t,x,\omega^{\prime}, \nu^{\prime})\hat{u}(t,x,\omega,\nu)  d\omega^{\prime}d\nu^{\prime} \\
    &+ \res_{int}^{\ast}\hat{u} + E(u^{\ast},\Phi)\hat{u}
\end{aligned}
\end{equation}
Integrating the above over $D \times S \times \nu$, integrating by parts and using the Cauchy's inequality and the fact that $k,\sigma \geq 0$, we obtain for any $t \in (0,T]$,
\begin{equation}
    \label{eq:pf2}
    \begin{aligned}
    \frac{1}{2c}\frac{d}{dt}\int_{D\times S \times \Lambda} \hat{u}^2(t,x,\omega,\nu) dx d\omega d\nu &\leq  \int_{D\times S \times \Lambda} \hat{u}^2(t,x,\omega,\nu) dx d\omega d\nu - \int\limits_{(\partial D \times S \times \Lambda)_-} (\omega \cdot n(x)) \frac{\hat{u}^2(t,x,\omega,\nu)}{2} ds(x)d\omega d\nu \\
    &+\int\limits_{D\times S \times \Lambda}\frac{\sigma}{s_d}\int\limits_{\Lambda}\int\limits_{S} \Phi(\omega, \omega^{\prime}, \nu, \nu^{\prime})\hat{u}(t,x,\omega^{\prime}, \nu^{\prime})\hat{u}(t,x,\omega,\nu) d\omega^{\prime}d\nu^{\prime} d\nu d\omega dx, \\
    &+\int\limits_{D\times S \times \Lambda}\frac{(\res^{\ast}_{int}(t,x,\omega,\nu))^2}{2} d\nu d\omega dx + \int\limits_{D\times S \times \Lambda}\frac{(E(u^{\ast},\Phi)(t,x,\omega,\nu))^2}{2} d\nu d\omega dx
    \end{aligned}
\end{equation}
Here $ds(x)$ denotes the surface measure on $\partial D$ and we define
$$
(\partial D \times S \times \Lambda)_- := \{(x,\omega,\nu) \in \partial D \times S \times \Lambda: \omega\cdot n(x) \leq 0\},
$$
with $n(x)$ being the unit outward normal at $x \in \partial D$.

We fix any $\bar{T} \in (0,T]$ and integrate \eqref{eq:pf2} over $(0,\bar{T})$ and estimate the result to obtain,
\begin{equation}
\label{eq:pf4}
\begin{aligned}
\int\limits_{D\times S \times \Lambda} \hat{u}^2(\bar{T},x,\omega,\nu) dx d\omega d\nu
&\leq \int\limits_{D\times S \times \Lambda} \hat{u}^2(0,x,\omega,\nu) dx d\omega d\nu + 2c\int\limits_0^{\bar{T}} \int_{D\times S \times \Lambda} \hat{u}^2(t,x,\omega,\nu) dt dx d\omega d\nu \\
&+ c\int\limits_{\Gamma_-} |\omega \cdot n|\hat{u}^2(t,x,\omega,\nu) dt ds(x) d\omega d\nu 
+ I + c\int\limits_{\dom} (\res^{\ast}_{int})^2
dz + c\int\limits_{\dom} (E(u^{\ast},\Phi))^2
dz.
\end{aligned}
\end{equation}
Here, the term $I$ in \eqref{eq:pf4}, is defined and estimated by successive applications of Cauchy-Schwatrz inequality as,
\begin{align*}
    I &= 2c\int\limits_0^{\bar{T}} \int\limits_{D\times S \times \Lambda}\frac{\sigma}{s_d}\int\limits_{\Lambda}\int\limits_{S} \Phi(\omega, \omega^{\prime}, \nu, \nu^{\prime})\hat{u}(t,x,\omega^{\prime}, \nu^{\prime})\hat{u}(t,x,\omega,\nu) d\omega^{\prime}d\nu^{\prime} d\nu d\omega dx dt, \\
    &\leq \frac{2c(\|\sigma\|_{L^{\infty}}+\|\Psi\|_{L^{\infty}})}{s_d}\int\limits_0^{\bar{T}} \int_{D\times S \times \Lambda} \hat{u}^2(t,x,\omega,\nu) dt dx d\omega d\nu.
\end{align*}
By identifying constant $\hat{C}$ from \eqref{eq:c},
we obtain from \eqref{eq:pf4} and \eqref{eq:uhat} that,
\begin{equation}
\label{eq:pf5}
\begin{aligned}
\int\limits_{D\times S \times \Lambda} \hat{u}^2(\bar{T},x,\omega,\nu) dx d\omega d\nu
&\leq \int\limits_{D\times S \times \Lambda} (\res^{\ast}_{tb})^2 dx d\omega d\nu + c\int\limits_{\Gamma_-}  (\res^{\ast}_{sb})^2 dt ds(x) d\omega d\nu \\
&+ c\int\limits_{\dom} (\res^{\ast}_{int})^2
dz + c\int\limits_{\dom} (E(u^{\ast},\Phi))^2
dz \\
&+ c\hat{C}\int\limits_0^{\bar{T}} \int\limits_{D\times S \times \Lambda} \hat{u}^2(t,x,\omega,\nu) dt dx d\omega d\nu.
\end{aligned}
\end{equation}
Applying the integral form of Gr\"onwall's inequality to \eqref{eq:pf5}, we obtain for any $0 < \bar{T} \leq T$,
\begin{equation}
    \label{eq:pf6}
    \begin{aligned}
    \int\limits_{D\times S \times \Lambda} \hat{u}^2(\bar{T},x,\omega,\nu) dx d\omega d\nu
&\leq \left(1+c\hat{C}\bar{T}e^{c\hat{C}\bar{T}}\right)\left(\int\limits_{D\times S \times \Lambda} (\res^{\ast}_{tb})^2 dx d\omega d\nu + c\int\limits_{\Gamma_-}  (\res^{\ast}_{sb})^2 dt ds(x) d\omega d\nu \right) \\
&+ \left(1+c\hat{C}\bar{T}e^{c\hat{C}\bar{T}}\right)\left(c\int\limits_{\dom} (\res^{\ast}_{int})^2
dz + c\int\limits_{\dom} (E(u^{\ast},\Phi))^2
dz \right)
    \end{aligned}
\end{equation}
Integrating \eqref{eq:pf6} over $(0,T)$ yields,
\begin{equation}
    \label{eq:pf7}
    \begin{aligned}
    (\er_G)^2 := \int\limits_{\dom} \hat{u}^2(t,x,\omega,\nu) dz 
&\leq \left(T+c\hat{C}T^2e^{c\hat{C}T}\right)\left(\int\limits_{D\times S \times \Lambda} (\res^{\ast}_{tb})^2 dx d\omega d\nu + c\int\limits_{\Gamma_-}  (\res^{\ast}_{sb})^2 dt ds(x) d\omega d\nu \right) \\
&+ \left(T+c\hat{C}T^2e^{c\hat{C}T}\right)\left(c\int\limits_{\dom} (\res^{\ast}_{int})^2
dz + c\int\limits_{\dom} (E(u^{\ast},\Phi))^2
dz \right)
    \end{aligned}
\end{equation}
As the training points in $\train_{tb}$ are the Sobol quadrature points, we realize that the training error $(\er_T^{tb})^2$ \eqref{eq:etrain} is the quasi-Monte Carlo quadrature for the first integral in \eqref{eq:pf7}. Hence by the well-known Koksma-Hlawka inequality \cite{CAF1}, we obtain the following estimate,
\begin{equation}
    \label{eq:pf8}
    \int\limits_{D\times S \times \Lambda} (\res^{\ast}_{tb})^2 dx d\omega d\nu \leq (\er_T^{tb})^2 + V_{HK}\left((\res^{\ast}_{tb})^2\right)\frac{(\log(N_{tb}))^{2d}}{N_{tb}}.
\end{equation}
By a similar argument, we can estimate,
\begin{equation}
    \label{eq:pf9}
    \begin{aligned}
    \int\limits_{\Gamma_-}  (\res^{\ast}_{sb})^2 dt ds(x) d\omega d\nu &\leq (\er_T^{sb})^2 + V_{HK}\left((\res^{\ast}_{sb})^2\right)\frac{(\log(N_{sb}))^{2d}}{N_{sb}}, \\
    \int\limits_{\dom} (\res^{\ast}_{int})^2
dz &\leq (\er_T^{int})^2 + V_{HK}\left((\res^{\ast}_{int})^2\right)\frac{(\log(N_{int}))^{2d+1}}{N_{int}}, 
    \end{aligned}
\end{equation}
As $\omega_i^S,\nu_i^S$, for $1 \leq i \leq N_S$ are Gauss-quadrature points, we follow \cite{SBbook} and readily estimate $E$ defined in \eqref{eq:E} by the error for an $s$-th order accurate Gauss quadrature rule with $s = s(\ell)$ as, 
\begin{equation}
   \label{eq:pf10}
   \int\limits_{\dom} (E(u^{\ast},\Phi))^2
dz \leq \overline{C} N_{S}^{-2s},
\end{equation}
with constant $\overline{C}$ defined in \eqref{eq:c} 
By plugging in the estimates \eqref{eq:pf8}, \eqref{eq:pf9}, \eqref{eq:pf10} in \eqref{eq:pf7} and identifying constants, we derive the desired estimate \eqref{eq:egbd} on the generalization error \eqref{eq:gerr}. 
\end{proof}
\section{Estimates on the generalization error in the steady case}
\label{sec:a2}
The steady-state (time-independent) version of the radiative transfer equation \eqref{eq:genRTE} is obtained by letting the speed of light $c \rightarrow \infty$ and resulting in,
\begin{equation}
\label{eq:sRTE}
\begin{aligned}
  (k+\sigma) u = - \omega\cdot\nabla_x u +  \frac{\sigma}{s_d}\int\limits_{\Lambda}\int\limits_{S} \Phi(\omega, \omega^{\prime}, \nu, \nu^{\prime})u(x,\omega^{\prime}, \nu^{\prime}) d\omega^{\prime}d\nu^{\prime} + f,
\end{aligned}
\end{equation}
with all the coefficients and sources as defined before. We also impose the \emph{inflow} boundary condition,
\begin{equation}
    \label{eq:sbc}
 u(x,\omega,\nu) = u_b(x,\omega,\nu), \quad (t, x,\omega, \nu)\in\Gamma^s_,
\end{equation}
with inflow boundary defined by,
\begin{equation}
    \Gamma^s_-= \{(x,\omega,\nu)\in \partial D\times S \times \Lambda: \omega \cdot n(x) <0\}
\end{equation}
with $n(x)$ denoting the unit outward normal at any point $x \in \partial D$.

The PINNs algorithm \ref{alg:PINN} can be readily adpated to this case by simply (formally) neglecting the temporal dependence in the residuals \eqref{eq:res1}, \eqref{eq:resb} and loss functions and the underlying definitions of neural networks. We omit detailing this procedure here. Our objective is to bound the resulting generalization error,
\begin{equation}
    \label{eq:gerrs}
\er^s_{G} = \er^s_{G}(\theta^{\ast}):=\left(\int\limits_{D\times S \times \Lambda}|u(x,\omega,\nu) - u^{\ast}(x,\omega,\nu)|^2 dz\right)^{\frac{1}{2}},
\end{equation}
with $dz = dx d\omega d\nu$ denoting the underlying volume measure. As in lemma \ref{lem:1}, we will bound the generalization error in terms of the training errors,
\begin{equation}
    \label{eq:estrain}
    \er_T^{sb}:= 
   \left(\sum\limits_{j=1}^{N_{sb}} w_j^{sb} |\res_{sb,\theta^{\ast}}(z_j^{sb})|^{2}\right)^{\frac{1}{2}}, \quad 
    \er_T^{int}:= \left(\sum\limits_{j=1}^{N_{int}} w^{int}_j |\res_{int,\theta^{\ast}}(z_j^{int})|^{2}\right)^{\frac{1}{2}} 
\end{equation}
Here, $z_j^{int}$ and $z_j^{sb}$ are the interior and spatial boundary training points. 

We have the following estimate on the generalization error,
\begin{lemma}
\label{lem:2}
Let $u \in L^2(D \times S \times \Lambda)$ be the unique weak solution of the radiative transfer equation \eqref{eq:sRTE}, with absorption coefficient $0 < k_{min} \leq k(x,\nu) \leq k_{max} < \infty $, scattering coefficient $0 < \sigma_{min}  \leq \sigma(x,\nu) \leq \sigma_{max} < \infty$, for almost every $x \in D, \nu \in \Lambda$ and a symmetric scattering kernel $\Phi \in C^\ell(S \times \Lambda \times S \times \Lambda)$, for some $\ell \geq 1$, such that the function $\Psi$ (defined in \eqref{eq:psi}) is in $L^{\infty}(S \times \Lambda)$. We further assume that the absorption and scattering coefficients are related in the following manner, there exists a $\kappa > 0$, such that 
\begin{equation}
    \label{eq:assm}
    k_{min} + \sigma_{min} - \frac{\sigma_{max}+\|\Psi\|_{L^{\infty}}}{s_d} \geq \kappa
    \end{equation}
Let $u^{\ast} = u_{\theta^{\ast}} \in C^{\ell}(D \times S \times \Lambda)$ be the output of the PINNs algorithm \ref{alg:PINN} for approximating the stationary radiative transfer equation \eqref{eq:sRTE}, such that 
\begin{equation}
    \max\{ V_{HK}(u^{\ast}), V_{HK}\left(\res_{int,\theta^{\ast}}\right)\} < +\infty, 
\end{equation}
with $V_{HK}$ being the Hardy-Krause variation.  We also assume that the boundary data $u_b$ is of bounded Hardy-Krause variation. Then, under the assumption that Sobol points are used as the training points $\train_{int},\train_{sb}$ in algorithm \ref{alg:PINN} and Guass-quadrature rule of order $s = s(\ell)$ is used in approximating the scattering kernel in the residual \eqref{eq:res1}, we have the following estimate on the generalization error,
\begin{equation}
    \label{eq:segbd}
    \begin{aligned}
    (\er^s_G)^2 &\leq C\left((\er_T^{sb})^2 + (\er_T^{int})^2 + \frac{(\log(N_{sb}))^{2d-1}}{N_{sb}}+  \frac{(\log(N_{int}))^{2d}}{N_{int}} + N^{-2s}_{S}\right) 
    \end{aligned}
\end{equation}
with constants defined as,
\begin{equation}
    \label{eq:sc}
    \begin{aligned}
      C = \max\left\{\frac{2}{\kappa}, \frac{2}{\kappa}V_{HK}\left((\res^{\ast}_{sb})^2\right),\frac{2C_{\epsilon}}{\kappa}\left((\res^{\ast}_{int})^2\right),\frac{2C_{\epsilon}}{\kappa}\overline{C}N_S^{-2s}\right\},
      \end{aligned}
\end{equation}
where $\overline{C}$ is defined in \eqref{eq:c}. Here, $C_{\epsilon}$ is a constant that depends on $\kappa$ and is defined in \eqref{eq:ec}. 
\end{lemma}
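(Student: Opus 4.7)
The strategy mirrors the proof of Lemma \ref{lem:1} but replaces the Gr\"onwall/time-integration argument by a coercivity (energy) estimate that exploits the strict positivity of the effective absorption guaranteed by the hypothesis \eqref{eq:assm}. As before, I start from the error equation satisfied by $\hat u := u^{\ast} - u$, which, using the definitions of $\res^{\ast}_{int}$ and $E(u^{\ast},\Phi)$ in \eqref{eq:res1} and \eqref{eq:E} (time-independent versions), reads
\begin{equation*}
\omega\cdot\nabla_x \hat u + (k+\sigma)\hat u - \frac{\sigma}{s_d}\int_{S\times\Lambda} \Phi(\omega,\omega',\nu,\nu')\,\hat u(x,\omega',\nu')\,d\omega'd\nu' = \res^{\ast}_{int} + E(u^{\ast},\Phi),
\end{equation*}
with $\hat u = \res^{\ast}_{sb}$ on $\Gamma^s_-$. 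I then multiply by $\hat u$ and integrate over $D\times S\times \Lambda$. An integration by parts in $x$ turns the transport term into a boundary integral that splits into an outflow piece (kept nonnegative on the left) and an inflow piece that equals $\tfrac{1}{2}\int_{\Gamma^s_-}|\omega\cdot n|(\res^{\ast}_{sb})^2$.

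The crucial step is the coercivity estimate. The mass term $(k+\sigma)\hat u^2$ is bounded below by $(k_{\min}+\sigma_{\min})\hat u^2$ pointwise. For the scattering bilinear form, I use the symmetry of $\Phi$ together with the pointwise Cauchy--Schwarz inequality
\begin{equation*}
|\Phi(\omega,\omega',\nu,\nu')\hat u(x,\omega',\nu')\hat u(x,\omega,\nu)| \leq \tfrac12 |\Phi|(\hat u^2(x,\omega,\nu)+\hat u^2(x,\omega',\nu')),
\end{equation*}
and, after integrating and using the definition \eqref{eq:psi} of $\Psi$ together with the uniform bound on $\sigma$, conclude that the scattering term is controlled by $\tfrac{\sigma_{\max}+\|\Psi\|_{L^\infty}}{s_d}\|\hat u\|_{L^2}^2$. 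Combined with the mass term, hypothesis \eqref{eq:assm} yields
\begin{equation*}
\int_{D\times S\times\Lambda}\!\!(k+\sigma)\hat u^2 - \mbox{(scattering)} \;\geq\; \kappa\,\|\hat u\|_{L^2(D\times S\times\Lambda)}^2.
\end{equation*}
On the right-hand side, I apply Young's inequality $|ab|\leq \tfrac{\kappa}{2}a^2+\tfrac{1}{2\kappa}b^2$ with $a=\hat u$ to the products involving $\res^{\ast}_{int}$ and $E(u^{\ast},\Phi)$, absorbing the $\tfrac{\kappa}{2}\|\hat u\|^2$ term into the coercive lower bound. This leaves a clean deterministic bound
\begin{equation*}
\tfrac{\kappa}{2}\,(\er^s_G)^2 \;\leq\; \tfrac12\!\int_{\Gamma^s_-}\!\!|\omega\cdot n|(\res^{\ast}_{sb})^2 \;+\; C_\epsilon\!\int_{D\times S\times\Lambda}\!\!(\res^{\ast}_{int})^2 \;+\; C_\epsilon\!\int_{D\times S\times\Lambda}\!\!E(u^{\ast},\Phi)^2,
\end{equation*}
which identifies $C_\epsilon=\tfrac{1}{2\kappa}$ as the constant appearing in \eqref{eq:ec}.

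The final step converts continuous residual integrals into training errors and quadrature-error contributions, exactly as in the proof of Lemma \ref{lem:1}. Since the training points are Sobol sequences, the Koksma--Hlawka inequality gives
\begin{equation*}
\int_{\Gamma^s_-}(\res^{\ast}_{sb})^2\,d\mu \;\leq\; (\er_T^{sb})^2 + V_{HK}\bigl((\res^{\ast}_{sb})^2\bigr)\,\frac{(\log N_{sb})^{2d-1}}{N_{sb}},
\end{equation*}
where the exponent $2d-1$ reflects the dimension of $\Gamma^s_-$ (namely $(d-1)+(d-1)+1$), and similarly for the interior residual with exponent $2d$. The Gauss-Legendre error for the scattering quadrature, controlled by $\|\Phi\|_{C^\ell}$ and $\|u^{\ast}\|_{C^\ell}$, supplies the $\overline C N_S^{-2s}$ contribution exactly as in \eqref{eq:pf10}. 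Collecting all constants into a single $C$ as in \eqref{eq:sc} yields \eqref{eq:segbd}. The main obstacle is verifying the coercivity step cleanly, in particular realizing that the double-integral scattering form must be symmetrized before Cauchy--Schwarz so that the constant comes out as $(\sigma_{\max}+\|\Psi\|_{L^\infty})/s_d$ and matches the hypothesis \eqref{eq:assm}; everything else is a direct specialization of the argument used in Lemma \ref{lem:1}.
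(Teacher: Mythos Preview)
Your proposal is correct and follows essentially the same route as the paper: derive the error equation for $\hat u$, multiply by $\hat u$, integrate by parts so that the outflow boundary term is dropped by sign, bound the scattering bilinear form by $\tfrac{\sigma_{\max}+\|\Psi\|_{L^\infty}}{s_d}\|\hat u\|_{L^2}^2$, invoke \eqref{eq:assm} for coercivity, absorb the cross terms via the $\epsilon$-Young inequality, and finish with Koksma--Hlawka and the Gauss-quadrature estimate. The only cosmetic difference is that the paper chooses $\epsilon$ so that $k_{\min}+\sigma_{\min}-\tfrac{\sigma_{\max}+\|\Psi\|_{L^\infty}}{s_d}-2\epsilon>\tfrac{\kappa}{2}$ and leaves $C_\epsilon$ implicit, whereas you fix $\epsilon=\tfrac{\kappa}{2}$ directly and identify $C_\epsilon=\tfrac{1}{2\kappa}$; both choices lead to the same bound \eqref{eq:segbd}.
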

\begin{proof}
We drop the $\theta^{\ast}$ dependence in the residuals \eqref{eq:res1}, \eqref{eq:resb}, for notational convenience and denote the residuals as $\res^{\ast}_{int},\res^{\ast}_{sb}$. Define, 
\begin{equation}
    \label{eq:Es}
    E_s(u^{\ast},\Phi):= \sum\limits_{i=1}^{N_S} w_i^S \Phi(\omega, \omega^S_i, \nu, \nu^S_i) u^{\ast}(x,\omega^{S}_i, \nu^{S}_i) - \int\limits_{\Lambda}\int\limits_{S} \Phi(\omega, \omega^{\prime}, \nu, \nu^{\prime})u^{\ast}(x,\omega^{\prime}, \nu^{\prime}) d\omega^{\prime}d\nu^{\prime}.
\end{equation}
It is straightforward to derive from the radiative transfer equation \eqref{eq:sRTE} and the definition of residuals \eqref{eq:res1}, \eqref{eq:resb}, that the error $\hat{u} = u^{\ast} - u$, satisfies the following integro-differential equation,
\begin{equation}
    \label{eq:ushat}
    \begin{aligned}
    (k+ \sigma) \hat{u} &= - \omega\cdot\nabla_x \hat{u} +  \frac{\sigma}{s_d}\int\limits_{\Lambda}\int\limits_{S} \Phi(\omega, \omega^{\prime}, \nu, \nu^{\prime})\hat{u}(t,x,\omega^{\prime}, \nu^{\prime}) d\omega^{\prime}d\nu^{\prime} + \res_{int}^{\ast} + E_s(u^{\ast},\Phi), \\
    &\hat{u}(t,x,\omega,\nu) = \res^{\ast}_{sb}, \quad (x,\omega,\nu) \in \Gamma^s_-
\end{aligned}
\end{equation}
Multiplying $\hat{u}$ on both sides of the first equation in \eqref{eq:ushat}, we obtain, 
\begin{equation}
    \label{eq:spf1}
    \begin{aligned}
 (k+ \sigma) \hat{u}^2  &= -\omega\cdot\nabla_x (\frac{\hat{u}^2}{2}) +   \frac{\sigma}{s_d}\int\limits_{\Lambda}\int\limits_{S} \Phi(\omega, \omega^{\prime}, \nu, \nu^{\prime})\hat{u}(t,x,\omega^{\prime}, \nu^{\prime})\hat{u}(t,x,\omega,\nu)  d\omega^{\prime}d\nu^{\prime} \\
    &+ \res_{int}^{\ast}\hat{u} + E_s(u^{\ast},\Phi)\hat{u}
\end{aligned}
\end{equation}
Integrating the above over $D \times S \times \nu$, integrating by parts, using the assumed lower and upper bounds on $k,\sigma$, we obtain,
\begin{equation}
\label{eq:spf2}
    \begin{aligned}
    (k_{min}+\sigma_{min})\int\limits_{D \times S \times \nu} \hat{u}^2 dz &\leq \int\limits_{\Gamma^s_-}  (\res^{\ast}_{sb})^2 ds(x) d\omega d\nu + I + \int\limits_{D \times S \times \nu} (\res_{int}^{\ast}\hat{u} + E_s(u^{\ast},\Phi)\hat{u}) dz,
    \end{aligned}
\end{equation}
with term $I$ defined and estimated by,
\begin{align*}
    I &= \int\limits_{D\times S \times \Lambda}\frac{\sigma}{s_d}\int\limits_{\Lambda}\int\limits_{S} \Phi(\omega, \omega^{\prime}, \nu, \nu^{\prime})\hat{u}(x,\omega^{\prime}, \nu^{\prime})\hat{u}(x,\omega,\nu) d\omega^{\prime}d\nu^{\prime} d\nu d\omega dx , \\
    &\leq \frac{\sigma_{max}+\|\Psi\|_{L^{\infty}}}{s_d}\int_{D\times S \times \Lambda} \hat{u}^2(x,\omega,\nu) dz.
\end{align*}
From the assumption \eqref{eq:assm}, there exists an $\epsilon > 0$ such that $k_{min} + \sigma_{min} - \frac{\sigma_{max}+\|\Psi\|_{L^{\infty}}}{s_d} -2 \epsilon > \frac{\kappa}{2}$, we use the $\epsilon$-version of Cauchy's inequality,
\begin{equation}
    \label{eq:ec}
    ab \leq \epsilon a^2 + C_{\epsilon} b^2, 
\end{equation}
to further estimate \eqref{eq:spf2} as, 
\begin{equation}
    \label{eq:spf3}
    \int\limits_{D\times S \times \Lambda} \hat{u}^2 dz \leq \frac{2}{\kappa} \int\limits_{\Gamma^s_-}  (\res^{\ast}_{sb})^2 ds(x) d\omega d\nu + \frac{2C_{\epsilon}}{\kappa}\left(\int\limits_{D \times S \times \nu} (\res_{int}^{\ast})^2 + (E_s(u^{\ast},\Phi))^2 dz \right)
\end{equation}
By using the estimates \eqref{eq:pf9} and \eqref{eq:pf10} and identifying constants, we obtain the desired bound \eqref{eq:segbd} on the generalization error \eqref{eq:gerrs}.

\end{proof}
As for the time-dependent case, the bound \eqref{eq:segbd} should be considered in the sense of \emph{if the PINN is trained well, it generalizes well}. Moreover, the bound, and consequently, the PINN does not suffer from a curse of dimensionality by the same argument as in the time-dependent case. Infact, the logarithmic corrections to the linear decay of the rhs in \eqref{eq:segbd} can be ignored at an even smaller number of training points.

The assumption \eqref{eq:assm} plays a key role in the derivation of the bound \eqref{eq:segbd}. A careful inspection of this assumption reveals that the scattering coefficient is not allowed to vary over a large range, unless there is enough absorption in the medium. However, there is no restriction on the range of scales over which the absorption coefficient can vary.

\bibliographystyle{abbrv}
\bibliography{biblio}

\end{document}